\documentclass[11pt]{article}

\usepackage[final]{acl}

\usepackage{times}
\usepackage{latexsym}
\usepackage{amsmath}
\usepackage{amsthm}
\usepackage{amssymb}
\usepackage{algorithm}
\usepackage{algpseudocode}
\usepackage{amsmath}
\usepackage{hyperref}
\usepackage{graphicx}
\usepackage{subcaption}  
\usepackage{booktabs}
\usepackage{bm}
\usepackage{amsmath, booktabs, enumitem}
\usepackage{mathtools}
\usepackage{microtype}

\newtheorem{assumption}{Assumption}
\newtheorem{definition}{Definition}
\newtheorem{lemma}{Lemma}
\newtheorem{theorem}{Theorem}
\newtheorem{corollary}{Corollary}
\newtheorem{remark}{Remark}

\usepackage[T1]{fontenc}

\usepackage[utf8]{inputenc}
\usepackage{booktabs}
\usepackage{microtype}
\usepackage{wrapfig}
\usepackage{inconsolata}

\usepackage{graphicx}
\usepackage{subcaption}

\title{LAARA: Layer-Aware Adaptive Rank Allocation for Parameter-Efficient Fine-Tuning}

\author{
  Ashutosh Tripathi$^{1}$, \quad
  Surya Deep Singh$^{1}$, \quad
  Pranab Sahoo$^{2}$, \quad
  Sriparna Saha$^{2}$ \\
  \\
  $^{1}$Independent Researcher \quad
  $^{2}$Indian Institute of Technology, Patna \\
  \\
  \small
  \texttt{ashutoshtripathi191@gmail.com}, \texttt{suryadeepsingh95@gmail.com}, \texttt{pranab\_2021cs25@iitp.ac.in}, \texttt{sriparna@iitp.ac.in}
}

\begin{document}
\maketitle
\begin{abstract}
Low-Rank Adaptation is widely used for parameter-efficient fine-tuning, yet existing methods typically assign the same adapter rank to every transformer layer despite their heterogeneous adaptation requirements. In this work, we show theoretically and empirically that uniform rank allocation is fundamentally suboptimal. Motivated by this observation, we propose LAARA (Layer Aware Adaptive Rank Allocation framework), a search-free framework that dynamically allocates ranks using lightweight diagonal Fisher estimates computed during training. LAARA combines projection-wise normalization, logarithmic compression, blended adapter importance estimation, and a vote-to-change dampening mechanism to produce stable and efficient rank adaptation. Experiments on GLUE and MathInstruct benchmark demonstrate that LAARA consistently matches or outperforms popular state of the art approaches such as LoRA, AdaLoRA, DyLoRA, and Bitfit while using significantly fewer trainable parameters. Our results show that Fisher-guided rank allocation provides a principled and effective foundation for adaptive parameter-efficient fine-tuning. The code is publicly available at:
\url{https://anonymous.4open.science/r/LAARA-D305/LAARA.py}
\end{abstract}


\section{Introduction}
\label{sec:introduction}
\textls[-20]{
Large-scale foundation models have transformed natural language processing
and computer vision~\citep{achiam2023gpt,kirillov2023segment,sahoo2025feddual,sahoo2026harnessing,11261399,sahoo2024fedmrldataheterogeneityaware,10681112}, yet
full fine-tuning remains prohibitive in memory, compute, and storage.
Parameter-Efficient Fine-Tuning (PEFT) methods address this by updating
only a small subset of parameters while keeping pretrained weights frozen.
Low-Rank Adaptation (LoRA)~\citep{hu2022lora} is among the most widely
adopted, decomposing the task-specific weight update as:
\begin{equation}
  \Delta W = BA, \quad B \in \mathbb{R}^{d_{out} \times r},\;
  A \in \mathbb{R}^{r \times d_{in}},
  \label{eq:lora}
\end{equation}
where $r \ll \min(d_{out}, d_{in})$, with no added inference latency.
Despite its simplicity, as we show in this paper, the uniform rank
assumption underlying standard LoRA is provably suboptimal and a
principled, search-free remedy exists.

\paragraph{The problem of uniform rank and existing adaptive methods.}
Standard LoRA applies a single global rank across all weight matrices,
ignoring the empirically well-established heterogeneity in layer-wise
adaptation behaviour: some layers are far more
sensitive to downstream tasks than others, making any fixed rank
simultaneously suboptimal for all layers. Setting $r$ too high wastes
capacity and risks overfitting; setting it too low starves critical
layers of expressive power. Existing adaptive methods address this
partially but incompletely: AdaLoRA~\citep{zhang2023adalora} prunes
singular values during training but requires costly SVD decomposition
and sensitive regularization tuning; AutoLoRA~\citep{zhou2024autolora}
frames rank selection as architecture search, incurring substantial
overhead; DyLoRA~\citep{valipour2022dylora} trains across multiple
ranks simultaneously without principled per-layer assignment; and
SoRA~\citep{ding2023sparse} relies on heuristic sparsity thresholds.
Critically, none connects rank allocation to the statistical importance
of each layer as measured by loss-landscape curvature, a gap that
leaves parameter budgets allocated by approximation rather than
principle. More discussion of related work is in the
Appendix~\ref{sec:related_work}.

\paragraph{Fisher information as a principled criterion.}
The FIM measures how sensitively the model's predictive distribution
changes with respect to parameter perturbations~\citep{martens2015optimizing,
kirkpatrick2017ewc}, providing a theoretically grounded, search-free
signal for rank allocation. Layers with large Fisher information demand
higher-rank adapters to capture task-specific signals, while layers with
near-zero Fisher information can safely be assigned very small ranks.
Complementarily, the eigenspectrum of the layer-wise FIM encodes the
intrinsic dimensionality of task-relevant parameter variation, directly
motivating a data-driven rank assignment based on effective Fisher rank.
While Fisher information has been used in PEFT before,
\citet{kirkpatrick2017ewc} applied it to prevent forgetting in continual
learning without connecting it to adapter capacity;
\citet{xue2025fishtuningenhancingpeftmethods} used it for sparse parameter
selection but applied a uniform threshold across layers, ignoring rank
heterogeneity; and FLoE~\citep{wang2025floe} combines Fisher-guided layer
selection with Bayesian search, reintroducing the optimization overhead
that a principled Fisher criterion should eliminate. LAARA is the first
to derive layer-wise ranks directly from Fisher estimates without
search or auxiliary objectives.

\paragraph{Empirical motivation.}
Computing the Fisher trace exactly requires expensive Hessian estimation,
but a tractable alternative exists: by the empirical Fisher
approximation~\citep{martens2015optimizing}, the expected squared gradient
Frobenius norm is a consistent estimator of the Fisher trace (formalized
in Theorem~2, Part~2), meaning gradient norms recorded during standard
LoRA fine-tuning serve as a lightweight, training-time proxy for
layer-wise Fisher importance without any additional forward passes or
hyperparameter search. To validate this connection empirically, we
analyze fine-tuning dynamics of DeBERTa-v3 on GLUE RTE and CoLA,
surfacing three consistent observations that directly motivate the
design of LAARA (full details in Appendix~\ref{sec:impl_details}).


\textbf{Observation 1:} Gradient norms increase monotonically with layer
depth, primarily driven by \texttt{lora\_A} ($\rho = 0.972$,
$p = 1.29\times10^{-7}$), while \texttt{lora\_B} exhibits a weaker trend
($\rho = 0.692$, $p = 1.26\times10^{-2}$) with non-monotonic fluctuations
at middle layers (Figures~\ref{fig1} and~\ref{fig1q}). This asymmetry
arises because \texttt{lora\_A} directly encodes input-projected directions
and is a more faithful proxy for layer-wise curvature, motivating the
blended importance formulation of Eq.~\ref{eq:blend}.

\textbf{Observation 2:} Fisher traces are highly heterogeneous and stable
during training (Figures~\ref{fig2} and~\ref{fig1qe}). Layer~11
accumulates a Fisher trace nearly two orders of magnitude larger than
Layers~0 and~6, confirming that uniform rank is provably suboptimal and
that this heterogeneity is not dataset-specific. The layer-importance
ordering stabilizes after epoch~3, meaning a single lightweight
calibration pass suffices to estimate per-layer importance reliably.

\textbf{Observation 3:} Effective rank of weight updates decreases with
depth (Figures~\ref{fig3} and~\ref{fig3q}), falling from $\approx\!6$ at
Layer~0 to $\approx\!2$ at Layer~11. Early layers spread their budget
across many directions while late layers concentrate it in a few dominant
eigendirections — confirming that uniform rank systematically misallocates
capacity across layers whose intrinsic adaptation dimensionality differs by
a factor of $\sim\!3\times$.

Motivated by these findings, we propose \textbf{LAARA}, a
\textbf{L}ayer-\textbf{A}ware \textbf{A}daptive \textbf{R}ank
\textbf{A}llocation framework that derives layer-wise ranks directly
from a lightweight diagonal FIM approximation of LoRA gradients
without iterative search, auxiliary objectives, or meta-learners.
Rank assignments are stabilized through projection-wise normalization,
logarithmic importance compression, a blended adapter importance
formulation (Eq.~\ref{eq:blend}), and a vote-to-change dampening
mechanism that prevents rank oscillation during training.



\begin{figure}[ht]
    \centering
    
    \begin{subfigure}{0.48\textwidth}
        \centering
        \includegraphics[width=\linewidth]{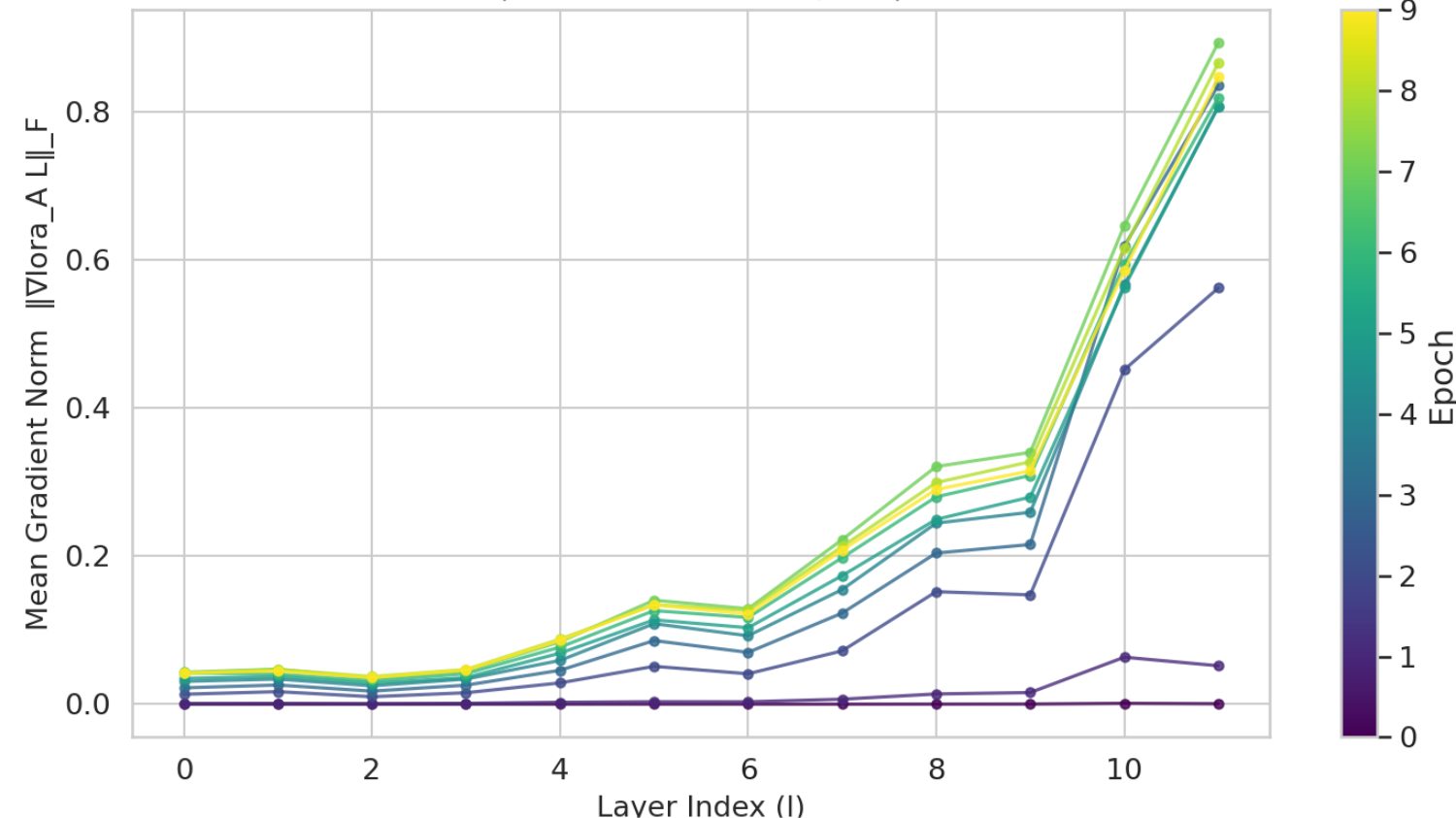}
        \caption{}
    \end{subfigure}
    \hfill
    \begin{subfigure}{0.48\textwidth}
        \centering
        \includegraphics[width=\linewidth]{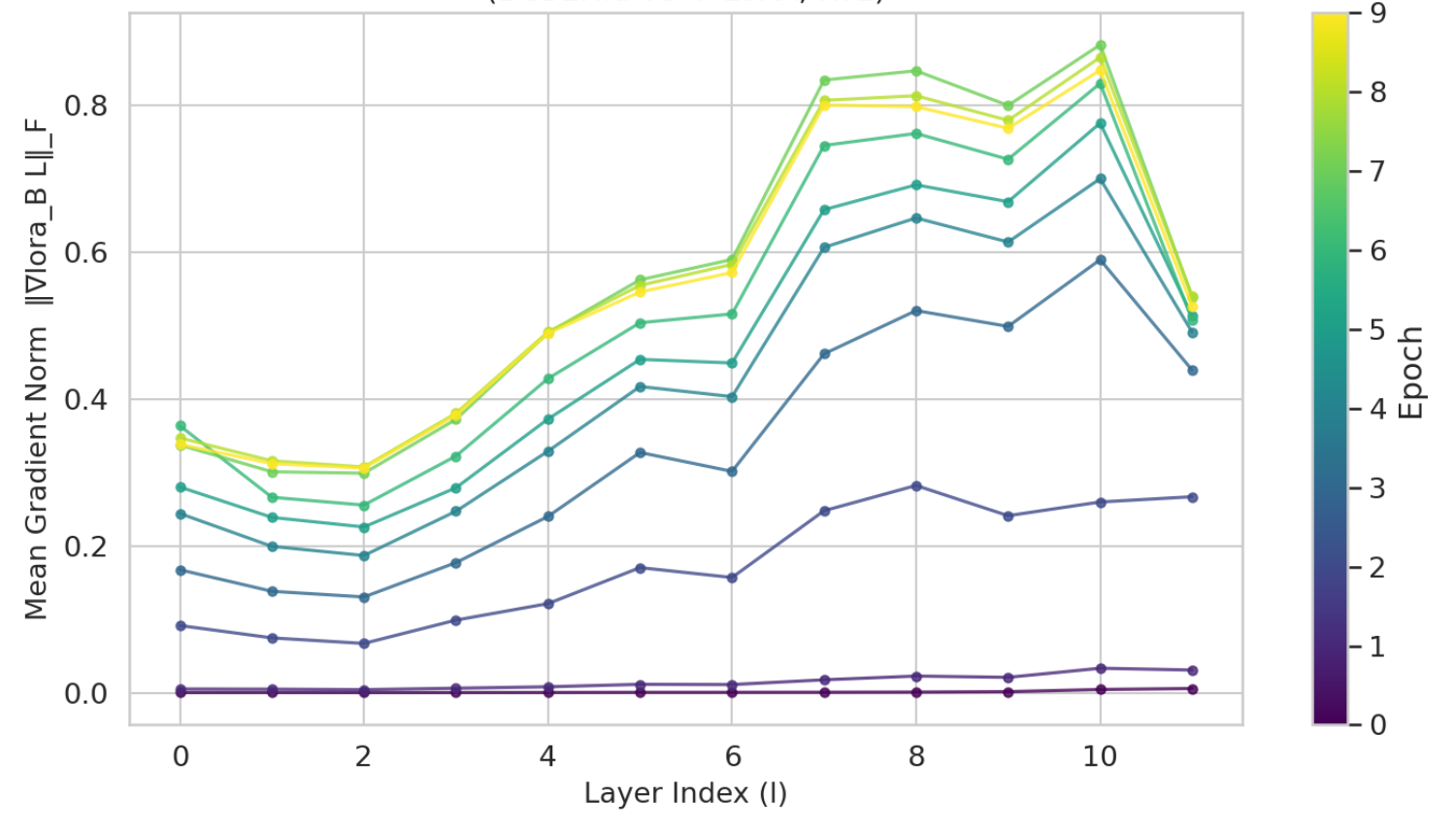}
        \caption{}
    \end{subfigure}
    
    \caption{Mean gradient norms vs layer index for Lora A (a) and Lora B (b) on RTE dataset.}
    \label{fig1}
\end{figure}

\begin{figure}[ht]
    \centering
    
    \begin{subfigure}{0.48\textwidth}
        \centering
        \includegraphics[width=\linewidth]{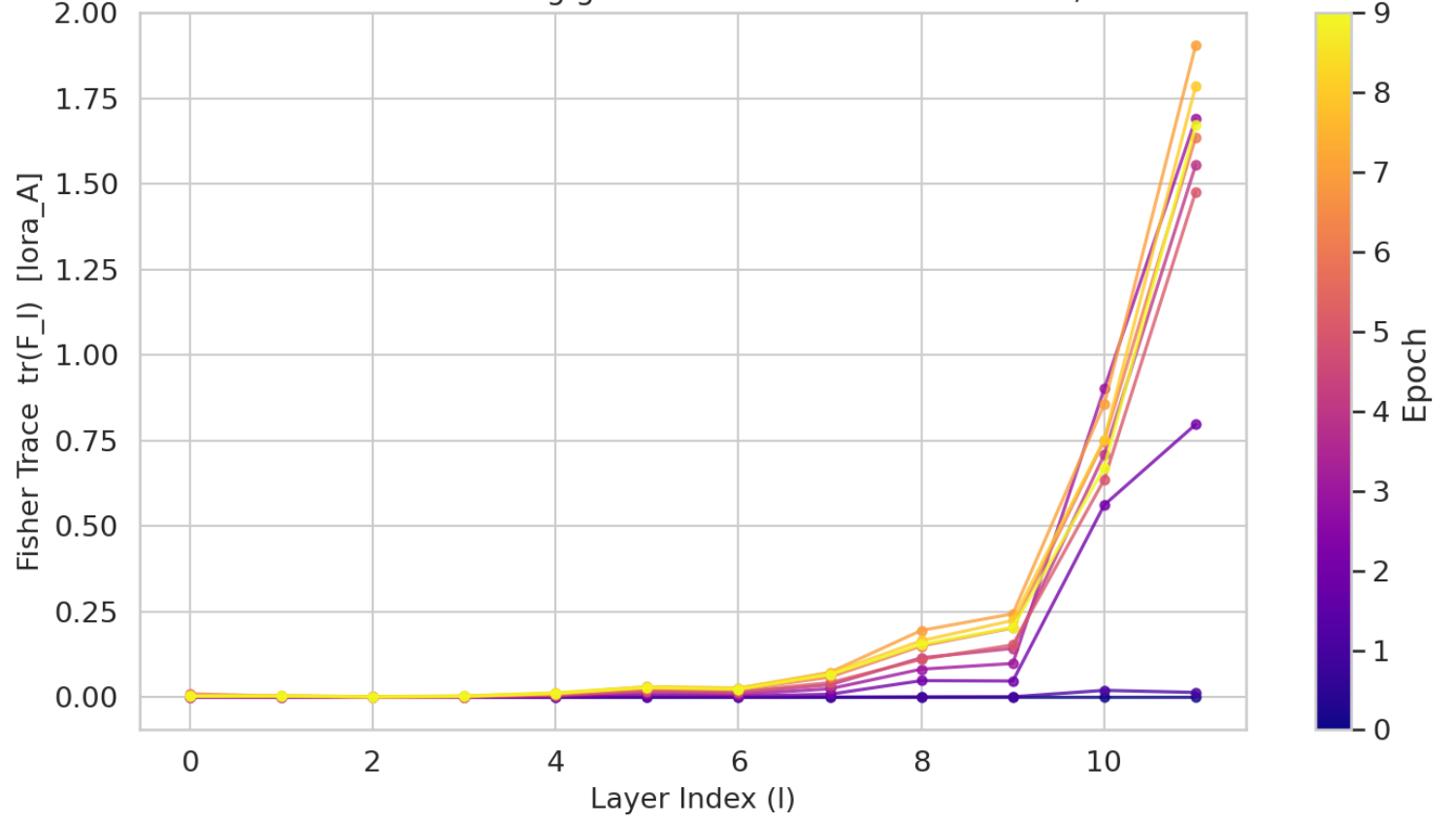}
        \caption{}
    \end{subfigure}
    \hfill
    \begin{subfigure}{0.48\textwidth}
        \centering
        \includegraphics[width=\linewidth]{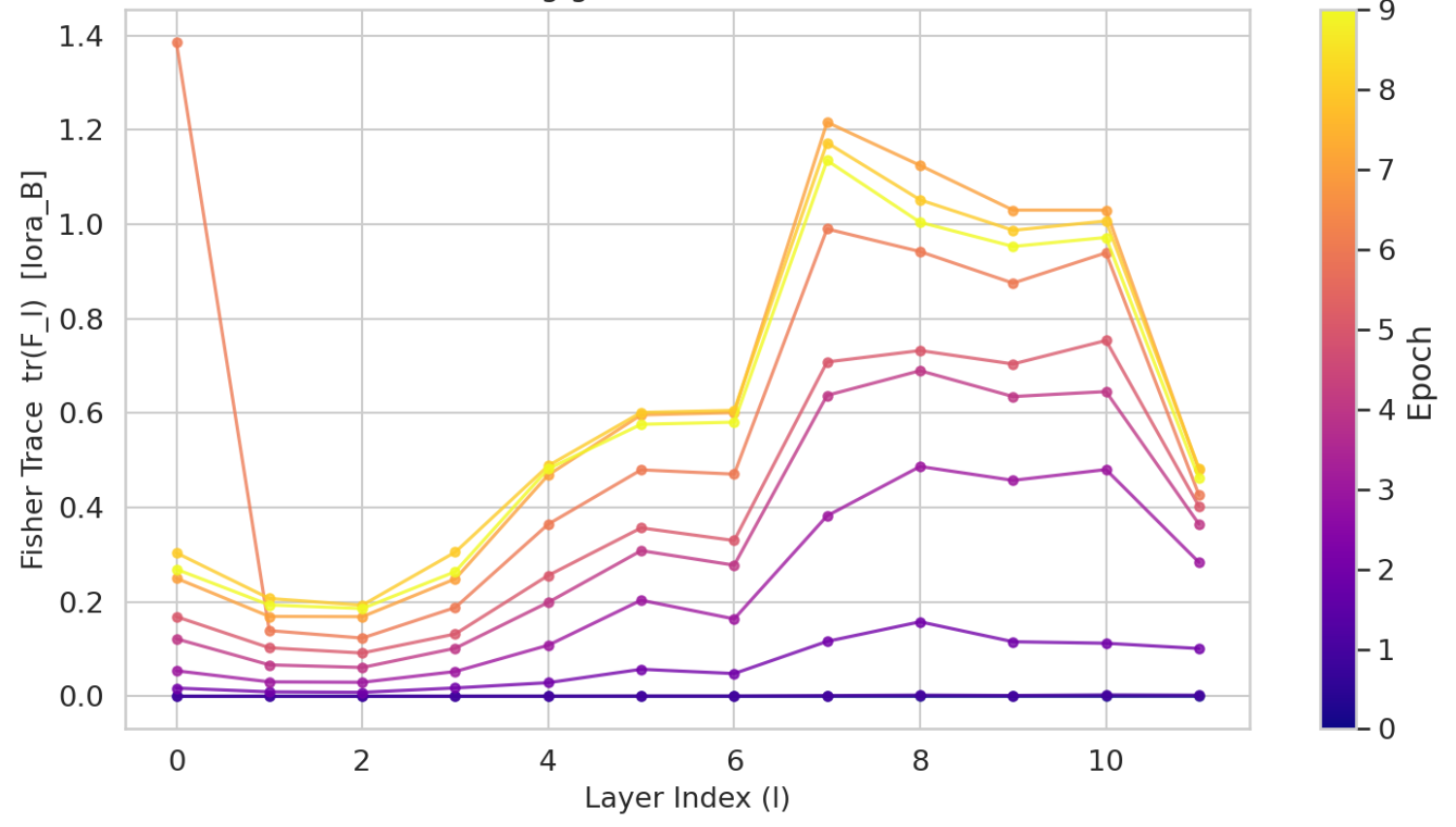}
        \caption{}
    \end{subfigure}
    
    \caption{Fisher trace vs layer index for Lora A (a) and Lora B (b) on RTE dataset.}
    \label{fig2}
\end{figure}

\begin{figure}[t]
    \centering
    \includegraphics[width=0.48\textwidth]{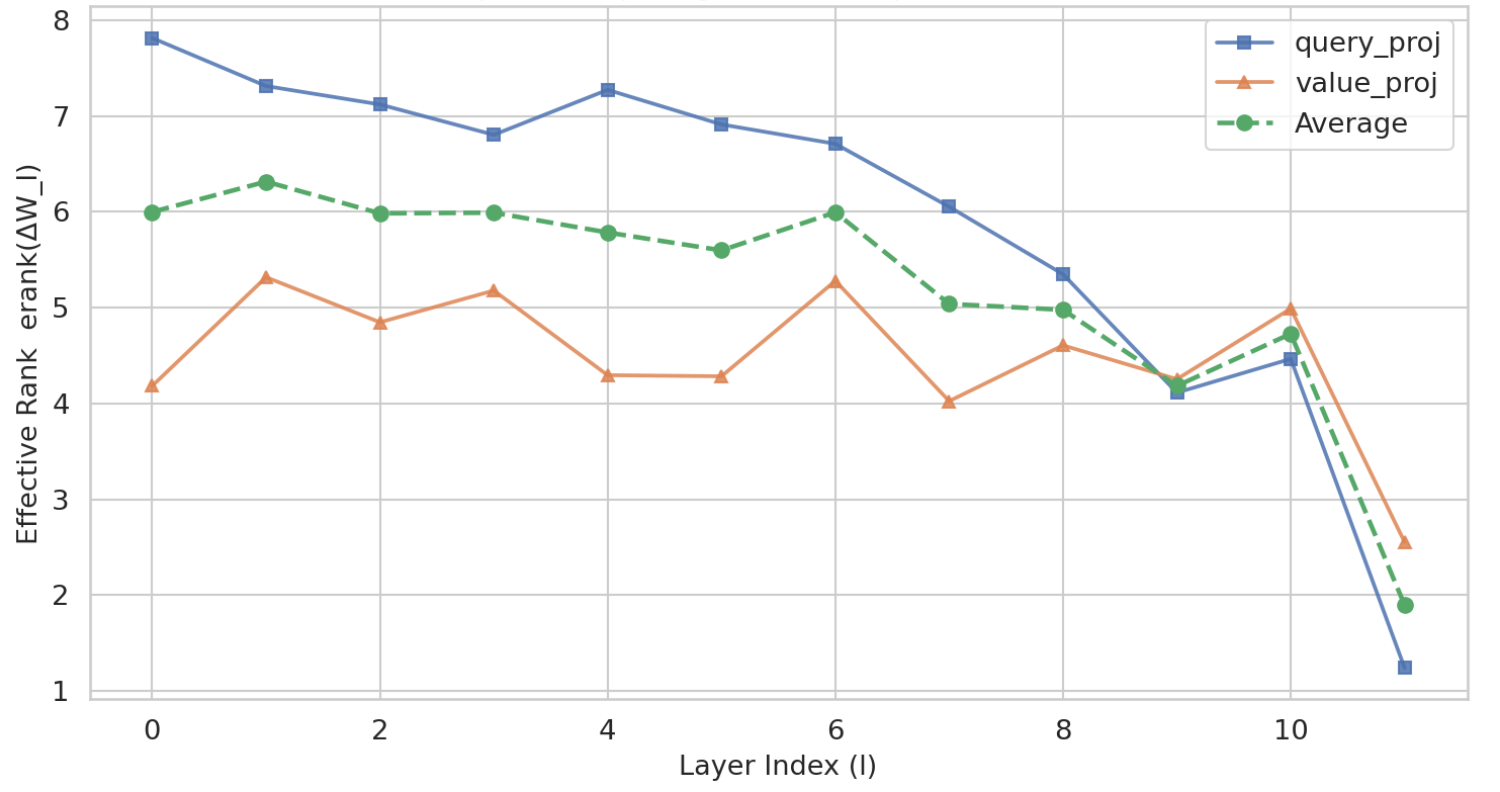}
    \caption{Effective rank vs layer index on RTE dataset.}
    \label{fig3}
\end{figure}

\paragraph{Contributions.}
\begin{itemize}
  \item We propose \textbf{LAARA}, a search-free framework for
        layer-adaptive LoRA rank allocation guided by lightweight
        Fisher information estimates.
  \item We introduce projection-wise normalization, logarithmic
        compression, and a blended adapter formulation for stable
        layer-wise importance estimation, alongside a vote-to-change
        dampening mechanism that prevents rank oscillation during
        training.
  \item Experiments on GLUE and mathematical reasoning demonstrate
        that LAARA consistently outperforms fixed-rank and
        adaptive-rank baselines at the same parameter budget.
\end{itemize}}

\section{Theoretical Analysis}
\label{sec:theory}

\textls[-20]{The empirical patterns discussed above suggest a deeper
mathematical structure governing how parameter importance shapes the
geometry of low-rank adaptation. We formalize this by addressing two
questions: \textbf{(RQ1)}~\emph{Is uniform rank allocation provably
suboptimal across transformer layers?} \textbf{(RQ2)}~\emph{Does the
required rank follow a structured pattern with layer depth?} We answer both affirmatively under mild conditions on the Fisher geometry of the pretrained model. Layers with concentrated Fisher spectra can be adapted with compact low-rank updates, while layers with broader spectra require higher-rank adapters to capture diverse optimization directions, making the effective rank of the layer-wise Fisher matrix a natural, principled estimate of the adapter rank required at each layer.}

\paragraph{Connecting theory to empirical observations.}
Observations~1 and~2 together imply that transformer layers differ
systematically in required adaptation capacity , formalized in
Theorem~1, which proves that uniform rank is suboptimal under
inter-layer Fisher heterogeneity. Observation~3 further shows that
this variation follows a structured depth-wise pattern, formalized in
Theorem~\ref{thm:rq2}, which establishes that gradient norms as Fisher
proxies induce a monotonic rank schedule across layers. Together, both
theorems provide a principled justification for layer-adaptive rank
allocation and directly motivate LAARA
(Sections~\ref{sec:rq1},~\ref{sec:rq2},~\ref{sec:rq1c}).

\subsection{Setup and Notation}
\label{sec:setup}

Consider a transformer with $L$ layers indexed by $\ell \in \{1,\dots,L\}$,
where larger $\ell$ denotes layers closer to the task loss. LoRA
parameterized the update at layer $\ell$ as $\Delta W_\ell = B_\ell A_\ell$,
where $B_\ell \in \mathbb{R}^{d_{\text{out}} \times r_\ell}$,
$A_\ell \in \mathbb{R}^{r_\ell \times d_{\text{in}}}$, and $r_\ell$ is
the adapter rank. Standard LoRA fixes $r_\ell = r$ for all $\ell$;
LAARA relaxes this constraint. Let $\theta_\ell = \{A_\ell, B_\ell\}$
denote the trainable parameters at layer $\ell$ and $\mathcal{L}(\theta)$
the downstream loss. The layer-wise FIM is:
\begin{equation}
\mathbf{F}_\ell =
\mathbb{E}_{(x,y)\sim\mathcal{D}}\!\left[
\nabla_{\theta_\ell}\log p(y|x;\theta)\,
\nabla_{\theta_\ell}\log p(y|x;\theta)^\top
\right].
\end{equation}
To quantify the intrinsic dimensionality of $\mathbf{F}_\ell$, we use
the effective rank:
\begin{equation}
\mathrm{erank}(\mathbf{F}_\ell)
= \exp\!\left(-\sum_{i=1}^{p}\bar{\lambda}_i\log\bar{\lambda}_i\right),
\quad
\bar{\lambda}_i = \frac{\lambda_i}{\sum_j \lambda_j},
\end{equation}
which measures the Shannon entropy of the normalized eigenvalue
distribution: a low value indicates a concentrated spectrum adaptable
with a compact update; a high value demands larger adapter capacity.
We denote by $r_\ell^{*}$ the \emph{optimal rank} of layer $\ell$:
the minimum rank sufficient to reduce approximation error below a
prescribed tolerance $\varepsilon > 0$ (Definition~\ref{def:optimal-rank}).

\subsection{RQ1: Uniform Rank Allocation is Suboptimal}
\label{sec:rq1}

We first formalize the conditions under which uniform rank allocation 
provably fails. Two mild assumptions on the Fisher geometry of the 
pretrained model suffice.

\subsubsection{Assumptions}

\begin{assumption}[Positive-definite Fisher]
\label{ass:pd12}
For every layer $\ell \in \{1,\ldots,L\}$, the Fisher Information Matrix 
$\bm{F}_{\ell}$ is positive definite, i.e.\ all eigenvalues satisfy 
$\lambda_{i}(\bm{F}_{\ell}) > 0$.
\end{assumption}

\textbf{Remark 1.}
Assumption~\ref{ass:pd12} holds generically when the model is identifiable 
and the training distribution has full support over the input space — 
conditions satisfied in practice for large language models finetuned on sufficiently rich datasets~\citep{6790500,pmlr-v89-karakida19a}.

\begin{assumption}[Heterogeneous Fisher spectra]
\label{ass:hetero123}
There exist two layers $\ell, \ell' \in \{1,\ldots,L\}$, $\ell \ne \ell'$, 
whose Fisher matrices have different effective ranks:
\[
  \mathrm{erank}(\bm{F}_{\ell}) \;\ne\; \mathrm{erank}(\bm{F}_{\ell'}).
\]
\end{assumption}

\textbf{Remark 2.}
Assumption~\ref{ass:hetero123} is well-supported both theoretically and empirically. \citet{zhang2023adalora} showed via singular value analysis that importance scores of incremental weight matrices vary substantially across transformer layers. \citet{Zhao_2025_CVPR} demonstrate using Fisher information that different layers exhibit markedly different parameter 
sensitivities. \citet{pmlr-v89-karakida19a} proved theoretically that the 
eigenvalue distribution of the FIM in deep networks is highly non-uniform 
across layers which is consistent with our empirical finding in Fig.~\ref{fig2}, where the Fisher trace of Layer~11 exceeds 
that of Layer~0 by nearly two orders of magnitude.

\subsubsection{Optimal Rank and Its Fisher Lower Bound}

\begin{definition}[Optimal LoRA rank]
\label{def:optimal-rank}
Fix a tolerance $\varepsilon > 0$. The \emph{optimal rank} $r_{\ell}^{*}$ 
for layer $\ell$ is the smallest rank for which a LoRA adapter can match 
the full-rank update to within $\varepsilon$:
{\scriptsize
\[
  r_{\ell}^{*} \;=\; \min\left\{ r \in \mathbb{Z}_{>0} \;\Big|\;
  \min_{\bm{B}_{\ell},\bm{A}_{\ell}}
  \mathbb{E}\!\left[\mathcal{L}(\bm{W}_{\ell} + \bm{B}_{\ell}\bm{A}_{\ell})
  \right] - \mathcal{L}^{*} \;\le\; \varepsilon \right\},
\]
}
where $\mathcal{L}^{*}$ is the loss achieved by the optimal full-rank 
update at layer $\ell$, with all other layers held fixed.
\end{definition}

The following two lemmas connect the optimal rank to the Fisher geometry of each layer and establish that heterogeneous Fisher spectra necessarily imply heterogeneous optimal ranks.

\begin{lemma}[FIM effective rank lower-bounds optimal rank]
\label{lem:fim-rank123}
Under Assumption~\ref{ass:pd12}, for any tolerance $\varepsilon > 0$ 
there exists a constant $c(\varepsilon) > 0$, depending only on 
$\varepsilon$ and $\mathcal{L}$, such that $r_{\ell}^{*} \;\ge\; c(\varepsilon) \cdot \mathrm{erank}(\bm{F}_{\ell}).$
\end{lemma}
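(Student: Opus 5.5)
We plan to argue through a local quadratic model of the loss around the pretrained weights. The loss is viewed as a function of the full-rank update $\Delta W_\ell$ at layer $\ell$, with all other layers held fixed as in Definition~\ref{def:optimal-rank}. We write
\[
\mathbb{E}[\mathcal{L}(\bm{W}_\ell+\Delta W_\ell)]
\approx \mathcal{L}_0 + g_\ell^\top \mathrm{vec}(\Delta W_\ell)
+ \tfrac12\,\mathrm{vec}(\Delta W_\ell)^\top \bm{F}_\ell\,\mathrm{vec}(\Delta W_\ell),
\]
using the standard identification of the Hessian of the negative log-likelihood with the Fisher matrix (Gauss--Newton / Fisher equivalence). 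By Assumption~\ref{ass:pd12}, $\bm{F}_\ell \succ 0$, so the full-rank minimiser $\Delta W_\ell^\star = -\bm{F}_\ell^{-1} g_\ell$ is unique. Its excess loss is $\mathcal{L}^*$. For any other update $\Delta$, the excess loss equals $\tfrac12 \|\Delta-\Delta_\ell^\star\|_{\bm{F}_\ell}^2$. The definition of $r_\ell^*$ therefore becomes the smallest $r$ for which some rank-$r$ matrix $BA$ lies within $\bm{F}_\ell$-weighted distance $\sqrt{2\varepsilon}$ of $\Delta_\ell^\star$.

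The next step is to lower-bound this weighted low-rank approximation error in terms of the Fisher spectrum. We would rewrite the weighted distance in the whitened coordinates $\bm{F}_\ell^{1/2}\mathrm{vec}(\cdot)$. In those coordinates, the components of $\Delta_\ell^\star$ along the eigendirections of $\bm{F}_\ell$ carry energies $\lambda_i \langle u_i, \mathrm{vec}\,\Delta_\ell^\star\rangle^2$. The argument would then show that a rank-$r$ update can capture at most roughly the mass of the top $O(r)$ normalised eigenvalues $\bar\lambda_i$, and that reaching residual mass $\varepsilon$ requires capturing all but an $\varepsilon$-dependent fraction of the spectrum. We would then invoke the elementary entropy inequality: if a distribution $\bar\lambda$ places mass at least $1-\delta$ on $k$ atoms, then $H(\bar\lambda) \le \log k + h(\delta) + \delta\log p$. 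This gives $k \gtrsim \mathrm{erank}(\bm{F}_\ell)\cdot e^{-h(\delta)-\delta\log p}$. Combining the two steps yields $r_\ell^* \ge c(\varepsilon)\,\mathrm{erank}(\bm{F}_\ell)$, where $c(\varepsilon)$ is obtained by choosing $\delta=\delta(\varepsilon)$ through the normalisation of the quadratic model, which depends only on $\varepsilon$ and $\mathcal{L}$.

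The main obstacle is the bridge between the matrix rank $r$ of $BA$ and the number of Fisher eigendirections it can cover. A rank-$r$ matrix in $\mathbb{R}^{d_{out}\times d_{in}}$ is not confined to an $r$-dimensional subspace of parameter space. The argument therefore needs structure on $\bm{F}_\ell$. Our first attempt would assume a Kronecker-factored (K-FAC) form $\bm{F}_\ell \approx S_\ell \otimes G_\ell$. Under that form, rank-$r$ updates in whitened coordinates correspond to rank-$r$ matrices, and the Eckart--Young theorem applies directly to the singular spectrum. Then $\mathrm{erank}(\bm{F}_\ell)$ is controlled by $\mathrm{erank}(S_\ell)\mathrm{erank}(G_\ell)$. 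A second route would be to treat $\Delta_\ell^\star$ as generic with respect to the Fisher eigenbasis, for example with isotropic coefficients, so that its energy profile tracks $\bar\lambda$. Without some such assumption the constant cannot be uniform, and any residual dependence on $p$, $d_{in}$, and $d_{out}$ would have to be absorbed into $c(\varepsilon)$ and flagged explicitly.
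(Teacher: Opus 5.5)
\textbf{Verdict: genuine gap.} The gap is the bridge you flag yourself, and neither of your repairs closes it. The claim that a rank-$r$ update captures at most the mass of the top $O(r)$ normalised eigenvalues is false even under K-FAC combined with isotropic coefficients.

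Take $\bm{F}_\ell=I_p=I_{d_{\mathrm{in}}}\otimes I_{d_{\mathrm{out}}}$ with $p=d_{\mathrm{in}}d_{\mathrm{out}}$, and $g_\ell=-c\,\mathrm{vec}(\mathbf{1}\mathbf{1}^\top)$, so that $\Delta_\ell^\star=c\,\mathbf{1}\mathbf{1}^\top$. This Fisher is positive definite and Kronecker-factored. The coefficients of $\Delta_\ell^\star$ in the standard eigenbasis of $\bm{F}_\ell$ are all equal, $\bar\lambda$ is uniform, and $\mathrm{erank}(\bm{F}_\ell)=p$. With $\delta=2\varepsilon/(c^2p)$ (take $c^2p\gg\varepsilon$), your chain therefore predicts $r_\ell^*\gtrsim(1-\delta)p$. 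Yet $B=c\mathbf{1}$, $A=\mathbf{1}^\top$ reproduces $\Delta_\ell^\star$ exactly, so $r_\ell^*=1$ for every $\varepsilon$.

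Here is why the step fails.
\begin{itemize}
\item The principle behind it (Ky Fan: an $m$-dimensional subspace captures at most the top-$m$ eigen-mass) concerns a \emph{fixed} subspace and a \emph{random} target. Here the approximant is chosen after seeing a single target. It comes from the rank-$\le r$ set, which is a union of linear subspaces of dimension $r\,d_{\mathrm{in}}$, not one $O(r)$-dimensional subspace.
\item Under K-FAC, Eckart--Young does apply, but it controls $r_\ell^*$ through the singular values of the whitened target $G_\ell^{1/2}\Delta_\ell^\star S_\ell^{1/2}$. In the example these are $(c\sqrt p,0,\dots,0)$. The spectra of $S_\ell$ and $G_\ell$ do not determine them. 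The identity $\mathrm{erank}(S_\ell\otimes G_\ell)=\mathrm{erank}(S_\ell)\,\mathrm{erank}(G_\ell)$ is true but never reaches $r_\ell^*$.
\end{itemize}

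There is also a scale mismatch that no assumption on the target removes. Always $r_\ell^*\le\min(d_{\mathrm{in}},d_{\mathrm{out}})$, while $\mathrm{erank}(\bm{F}_\ell)$ can equal $d_{\mathrm{in}}d_{\mathrm{out}}$. So a constant valid for every loss satisfying Assumption~\ref{ass:pd12} on a layer of this shape is at most $1/\max(d_{\mathrm{in}},d_{\mathrm{out}})$, even for generic full-rank targets. The rank-one example lowers this to $1/p$. Your intended constant satisfies $e^{-h(\delta)}p^{-\delta}\ge 1/(2\sqrt p)>1/p$ for $\delta\le 1/2$ and $p>4$, so it is unattainable. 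What Assumption~\ref{ass:pd12} alone yields is the trivial bound $r_\ell^*\ge 1\ge\mathrm{erank}(\bm{F}_\ell)/p$. By the example this bound is sharp, and it needs none of the quadratic or entropy machinery.

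The paper's proof has the same skeleton: Taylor expansion, $\bm{H}_\ell\approx\bm{F}_\ell$, excess loss as a weighted distance, then a count of sensitive eigendirections compared with $\mathrm{erank}$. It breaks at the same spot.
\begin{itemize}
\item It declares the rank-$r$ updates to be a linear subspace $\mathcal{S}_r$ of dimension at most $r(d+k)$. It then compares $r$ itself, rather than $r(d+k)$, with the threshold count $\rho_\ell(\sigma)$.
\item It finishes with $\rho_\ell(\sigma)\ge c'\,\mathrm{erank}(\bm{F}_\ell)$ for a universal $c'$, which fails: $\bm{F}_\ell=\tfrac{\sigma}{2}I$ gives $\rho_\ell(\sigma)=0$.
\item Its final constant contains $\lambda_{\min}(\bm{F}_\ell)$, so it is not layer-free.
\end{itemize}

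You improve on the paper in two places. First, keeping $g_\ell\neq 0$ avoids the paper's stationarity assumption, under which $\bm{\phi}^*=\bm{0}$ and every excess loss vanishes. Second, your Fano-type inequality $H(\bar\lambda)\le\log k+h(\delta)+\delta\log p$ (valid for $\delta\le 1/2$) is a correct substitute for the false threshold-count claim.

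Still, the missing ingredient is not extra structure on $\bm{F}_\ell$. The quantity $r_\ell^*$ is governed by the $\bm{F}_\ell$-weighted singular spectrum of the particular target $\Delta_\ell^\star$. A nontrivial lower bound must be phrased in terms of that spectrum, or of a distributional assumption on the target, not through $\mathrm{erank}(\bm{F}_\ell)$ alone.
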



\textbf{Remark 1.}
Lemma~\ref{lem:fim-rank123} establishes that layers with higher
effective Fisher rank require higher-rank LoRA adapters to achieve the
same approximation tolerance, directly formalizing the empirical
observation that later transformer layers which accumulate
substantially larger Fisher traces (Figure~\ref{fig3}), cannot be
adequately adapted with a uniform rank.

\begin{lemma}[Effective rank varies across layers]
\label{lem:hetero-rank}
Under Assumptions~\ref{ass:pd12} and~\ref{ass:hetero123}, the optimal 
ranks $r_{\ell}^{*}$ are not identical across all layers:
\[
  \exists\; \ell, \ell' : \quad r_{\ell}^{*} \;\ne\; r_{\ell'}^{*}.
\]
\end{lemma}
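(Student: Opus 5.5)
The plan is to argue by contradiction and pair the lower bound of Lemma~\ref{lem:fim-rank123} with a matching upper bound of the same order. I flag at the outset that the lemma cannot be proved from Lemma~\ref{lem:fim-rank123} alone. A one-sided bound $r_\ell^* \ge c(\varepsilon)\,\mathrm{erank}(\bm{F}_\ell)$ is compatible with every layer having the same optimal rank $r \ge c(\varepsilon)\max_\ell \mathrm{erank}(\bm{F}_\ell)$. The upper bound is therefore essential.

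Concretely, suppose for contradiction that $r_\ell^* = r$ for every $\ell$, and take the pair $\ell,\ell'$ from Assumption~\ref{ass:hetero123}. Without loss of generality $\mathrm{erank}(\bm{F}_\ell) > \mathrm{erank}(\bm{F}_{\ell'})$. The aim is to show that the common value $r$ is simultaneously too large for layer $\ell'$ and too small for layer $\ell$.

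The first step is the upper bound. Under Assumption~\ref{ass:pd12}, a second-order expansion of $\mathcal{L}$ around the optimal full-rank update at layer $\ell$ (other layers fixed) gives an excess loss $\approx \tfrac12\,\delta^\top \bm{F}_\ell\,\delta$ for a residual $\delta = \mathrm{vec}(\Delta W^\star_\ell - \bm{B}_\ell\bm{A}_\ell)$. Here the Fisher matrix stands in for the Hessian via the empirical Fisher identity invoked in the introduction. If a rank-$r$ adapter can capture the top-$k$ eigendirections of $\bm{F}_\ell$, the excess loss is controlled by the tail sum $\sum_{i>k}\lambda_i$. I would then use the entropy definition of $\mathrm{erank}$ to show that the normalized spectral mass outside the top $C\,\mathrm{erank}(\bm{F}_\ell)$ directions falls below a level depending only on $\varepsilon$. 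This would give $r_\ell^* \le C(\varepsilon)\,\mathrm{erank}(\bm{F}_\ell)$, and the same constants would apply to every layer.

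The second step closes the contradiction. Combining the two bounds at the two layers gives
\[
c(\varepsilon)\,\mathrm{erank}(\bm{F}_\ell) \;\le\; r \;\le\; C(\varepsilon)\,\mathrm{erank}(\bm{F}_{\ell'}),
\]
which is impossible as soon as $\mathrm{erank}(\bm{F}_\ell)/\mathrm{erank}(\bm{F}_{\ell'}) > C(\varepsilon)/c(\varepsilon)$. Hence some pair of layers has $r_\ell^* \ne r_{\ell'}^*$.

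I expect two obstacles, and the second is the main one. The first is that the map from Fisher eigendirections, which live in $(A,B)$-parameter space, to rank-$r$ weight updates is not linear. I would handle this by working at the LoRA initialization $B=0$, where the parameterization is locally linear in the relevant directions. The second, and harder, is the quantitative gap. Assumption~\ref{ass:hetero123} only asserts $\mathrm{erank}(\bm{F}_\ell) \ne \mathrm{erank}(\bm{F}_{\ell'})$, while the argument needs the ratio of effective ranks to exceed $C(\varepsilon)/c(\varepsilon)$; and because ranks are integers, a small gap in $\mathrm{erank}$ can round to the same $r$. I would try first to obtain a sharp two-sided characterization, $r_\ell^* = \lceil g_\varepsilon(\mathrm{erank}(\bm{F}_\ell))\rceil$ with $g_\varepsilon$ strictly increasing, for a suitable tolerance $\varepsilon$. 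Failing that, I would state the lemma under the strengthened assumption that the effective-rank ratio exceeds $C(\varepsilon)/c(\varepsilon)$, which Remark~2's two-orders-of-magnitude gap in Fisher traces makes plausible in practice.
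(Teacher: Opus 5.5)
Your diagnosis is correct, and it is sharper than the paper's own argument. The paper proves this lemma from Lemma~\ref{lem:fim-rank123} alone. Its key step writes $r_{l'}^{*}\le c(\varepsilon)\,\mathrm{erank}(\bm{F}_{l'})$, which is that lemma with the inequality reversed. It then concludes $r_l^*\ne r_{l'}^*$ from ``no single $r$ is optimal for both layers'', which presupposes the conclusion. In fact Lemma~\ref{lem:fim-rank123} is vacuous: $r_\ell^*\ge 1$ and $\mathrm{erank}(\bm{F}_\ell)\le p_\ell$, the dimension of $\bm{F}_\ell$, so $c(\varepsilon)=1/\max_\ell p_\ell$ always works.

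Your repair, however, fails at its first step. There is no upper bound $r_\ell^*\le C(\varepsilon)\,\mathrm{erank}(\bm{F}_\ell)$ with $C$ independent of the layer, because $r_\ell^*$ is governed by the optimal update $\Delta\bm{W}^\star_\ell$, not by the spectrum of $\bm{F}_\ell$. Take $d\times d$ updates and $\bm{F}_\ell=\lambda I+(1-\lambda)vv^\top$, with $v=\mathrm{vec}(I_d)/\sqrt{d}$ and small $\lambda>0$, so that $\mathrm{erank}(\bm{F}_\ell)\to 1$ as $\lambda\to 0$. Let $\Delta\bm{W}^\star_\ell=tI_d$. Since $\bm{F}_\ell\succeq\lambda I$, Eckart--Young gives excess loss at least $\tfrac{\lambda}{2}t^2(d-r)$ for every rank-$r$ adapter, so $r_\ell^*=d$ once $t^2>2\varepsilon/\lambda$. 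The same example shows the flaw in ``a rank-$r$ adapter captures the top-$k$ eigendirections''. Eigenvectors of $\bm{F}_\ell$ live in $\mathbb{R}^{dk}$ and reshape to matrices that can be full rank, as $v$ does here. Moreover, rank-$r$ matrices do not form a subspace. So matrix rank and the number of captured Fisher eigendirections are unrelated.

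More fundamentally, the statement is false under Assumptions~\ref{ass:pd12} and~\ref{ass:hetero123} alone. No argument, yours or the paper's, can prove it without new hypotheses. If every layer's optimal full-rank update $\Delta\bm{W}^\star_\ell$ has rank one, a rank-one adapter attains $\mathcal{L}^*$ exactly. Then $r_\ell^*=1$ for all $\ell$ and all $\varepsilon$, whatever the (positive-definite, heterogeneous) Fisher spectra. Similarly, if $\varepsilon\ge\max_\ell\bigl(\mathcal{L}(\bm{W}_\ell)-\mathcal{L}^*\bigr)$, the zero update $\bm{B}_\ell=0$ is admissible at rank one, and again every $r_\ell^*=1$.

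Your fallback hypothesis $\mathrm{erank}(\bm{F}_\ell)/\mathrm{erank}(\bm{F}_{\ell'})>C(\varepsilon)/c(\varepsilon)$ does not help. The rank-one counterexample survives any assumption on the spectra, and, as shown above, no layer-uniform $C(\varepsilon)$ exists. A provable version has to constrain $\varepsilon$ and the updates $\Delta\bm{W}^\star_\ell$ directly, e.g.\ through their $\bm{F}_\ell$-weighted best rank-$r$ approximation errors. At that point Fisher heterogeneity is no longer what drives the conclusion.
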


\subsubsection{Main Theorem for RQ1}

\begin{theorem}[Uniform rank allocation is suboptimal]
\label{thm:rq111}
\end{theorem}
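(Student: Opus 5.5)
The theorem is stated without a body, so I will first fix its natural formal content. Under Assumptions~\ref{ass:pd12} and~\ref{ass:hetero123}, no uniform assignment $r_\ell = r$ for all $\ell$ is optimal. Here "optimal" means either of two equivalent readings: (i) it achieves tolerance $\varepsilon$ at every layer with minimal total parameter budget, or (ii) for a fixed total budget $\sum_\ell r_\ell (d_{\text{in}}+d_{\text{out}})$, it minimizes the worst-case (or summed) excess loss. The plan is a short exchange argument built on Lemma~\ref{lem:hetero-rank}.

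\textbf{Step 1 (heterogeneous optimal ranks).} Apply Lemma~\ref{lem:hetero-rank}, itself a consequence of Lemma~\ref{lem:fim-rank123} and Assumption~\ref{ass:hetero123}, to get layers $\ell,\ell'$ with $r_\ell^* \ne r_{\ell'}^*$. Without loss of generality take $r_\ell^* < r_{\ell'}^*$.

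\textbf{Step 2 (feasibility forces waste).} Any uniform $r$ meeting tolerance $\varepsilon$ at every layer must satisfy $r \ge \max_k r_k^*$. This is the step where I need monotonicity in $r$: the set of achievable losses is nested in $r$, because a rank-$r$ adapter can emulate a rank-$(r-1)$ adapter by zeroing one column of $B$. So $\varepsilon$-feasibility at layer $k$ holds exactly when $r \ge r_k^*$.

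\textbf{Step 3 (strict improvement).} Define the heterogeneous allocation $\tilde r_k = r_k^*$. It is feasible by Definition~\ref{def:optimal-rank}. Its budget is $\sum_k r_k^* < L\max_k r_k^* \le Lr$, where the strict inequality uses $r_\ell^* < r_{\ell'}^* \le \max_k r_k^*$. Hence every feasible uniform allocation uses strictly more parameters than some feasible non-uniform one, which proves reading (i).

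For reading (ii), fix the budget $Lr$ and reallocate. If $r < r_{\ell'}^*$, layer $\ell'$ violates the tolerance. Either some layer $k$ has $r > r_k^*$, in which case its surplus can be moved toward $\ell'$, or every layer has $r \le r_k^*$, in which case heterogeneity means some layer has slack $r_k^* > r$ while others do not. In both cases I would shift units of rank from a layer with surplus to a layer in deficit and argue that the objective strictly improves.

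\textbf{Main obstacle.} The difficult part is making reading (ii) rigorous. That requires strict decrease of the per-layer excess loss $\min_{B,A}\mathbb{E}[\mathcal{L}]-\mathcal{L}^*$ as a function of rank below $r_k^*$, plus the fact that removing rank above $r_k^*$ keeps that layer within $\varepsilon$. The latter follows directly from minimality in Definition~\ref{def:optimal-rank}. For the former, I would try the local quadratic model $\mathcal{L}\approx \mathcal{L}^*+\tfrac12\,\mathrm{vec}(\Delta)^\top \mathbf{F}_\ell\,\mathrm{vec}(\Delta)$. Positive definiteness (Assumption~\ref{ass:pd12}) then makes the best rank-$r$ excess an Eckart--Young-type tail sum, which is strictly decreasing until the full-rank update is matched. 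If that proves too delicate, I would state the theorem in the budget-minimization form (i), which needs only Steps 1--3.
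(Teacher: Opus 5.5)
\textbf{Comparison with the paper.} Your reading-(i) argument is, up to packaging, the paper's Part~1. You invoke Lemma~\ref{lem:hetero-rank} and then split into two cases. If $r<\max_k r_k^*$, some layer misses tolerance $\varepsilon$; this follows from minimality in Definition~\ref{def:optimal-rank} alone, so Step~2 does not need the nestedness remark. If $r\ge\max_k r_k^*$, some layer with $r_k^*<r$ is over-parameterised. Your strict budget comparison with $\tilde r_k=r_k^*$ is a cleaner formalisation of the paper's ``wastes $(r-r^*_{\ell^-})(d+k)$ parameters''.

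The paper's theorem also asserts a quantitative Part~2, which you do not address: it lower-bounds the summed excess loss by $\frac{c(\varepsilon)}{2}\sum_\ell\max(0,\mathrm{erank}(\bm{F}_\ell)-r)^2\lambda_{\min}(\bm{F}_\ell)\|\bm{\phi}^{*,\ell}\|^2$. Your reading (ii) is not that claim, and it is not equivalent to (i). Once $r$ is large enough that every layer attains $\mathcal{L}^*$, the uniform allocation has zero excess and is optimal for (ii). Separately, for a general positive-definite $\bm{F}_\ell$ the weighted best rank-$r$ problem has no Eckart--Young tail-sum formula. Falling back to (i) is therefore the right call.

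\textbf{The gap: Step~1.} The genuine gap, which the paper shares, is that Lemma~\ref{lem:hetero-rank} is not a consequence of Lemma~\ref{lem:fim-rank123} and Assumption~\ref{ass:hetero123}. Lemma~\ref{lem:fim-rank123} only lower-bounds $r_\ell^*$. As stated, it holds for every model with $c=1/p$ ($p$ the largest dimension of the $\bm{F}_\ell$), because $r_\ell^*\ge 1$ and $\mathrm{erank}(\bm{F}_\ell)\le p$. Two different lower bounds cannot force two values apart. The paper's proof of Lemma~\ref{lem:hetero-rank} gets there only by writing $r_{\ell'}^*\le c(\varepsilon)\,\mathrm{erank}(\bm{F}_{\ell'})$, which is the reverse of what Lemma~\ref{lem:fim-rank123} gives.

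In fact the conclusion fails under Assumptions~\ref{ass:pd12}--\ref{ass:hetero123}, because they constrain the curvature but not the rank of the full-rank optimum. Take a two-layer linear-Gaussian model $W_2W_1x$ with $W_1=I$, isotropic inputs, and invertible $W_2$ with unequal singular values. Let the targets be shifted by $ab^{\top}x$ plus noise.
\begin{itemize}
\item The per-layer optima $W_2^{-1}ab^{\top}$ and $ab^{\top}$ are rank one, so $r_1^*=r_2^*=1$ for every $\varepsilon$.
\item The model Fishers in $\mathrm{vec}(\Delta W_\ell)$ are proportional to $I\otimes W_2^{\top}W_2$ and $I\otimes I$. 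Both are positive definite with different effective ranks; for the empirical Fisher the same holds once the noise is large.
\end{itemize}
Here uniform $r=1$ is exactly optimal in your sense (i), and neither branch of the paper's dichotomy holds. The paper's Part~2 fails too: its left side is $0$, while its right side is positive since $\mathrm{erank}(\bm{F}_\ell)>1$. The broken step there is the asserted projection bound $\|\bm{\phi}_r^{*,\ell}-\bm{\phi}^{*,\ell}\|^2\ge\max(0,\mathrm{erank}(\bm{F}_\ell)-r)^2\|\bm{\phi}^{*,\ell}\|^2/\mathrm{erank}(\bm{F}_\ell)^2$.

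\textbf{How to fix it.} State heterogeneity of the $r_\ell^*$ as an explicit hypothesis. Alternatively, derive it from a genuine two-sided link between $r_\ell^*$, the Fisher spectrum, and the rank of $\bm{\phi}^{*,\ell}$. With that hypothesis in place, your Steps~1--3 are complete and correct.
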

Under Assumptions~\ref{ass:pd12} and~\ref{ass:hetero123}, for any fixed 
rank $r \in \mathbb{Z}_{>0}$ and any tolerance $\varepsilon > 0$, the 
uniform rank assignment $r_{\ell} = r$ for all $\ell$ is suboptimal in 
the following sense: there exists at least one layer $\ell^{+}$ for which 
the rank-$r$ adapter fails to achieve tolerance $\varepsilon$, we define this as follows:
\[
  \min_{\bm{B}_{\ell^{+}},\bm{A}_{\ell^{+}}}
  \mathbb{E}\!\left[
    \mathcal{L}(\bm{W}_{\ell^{+}} + \bm{B}_{\ell^{+}}\bm{A}_{\ell^{+}})
  \right] - \mathcal{L}^{*} \;>\; \varepsilon,
\]
or at least one layer $\ell^{-}$ for which rank $r$ strictly exceeds the 
minimum rank needed to achieve tolerance $\varepsilon$:
\[
  r > r_{\ell^{-}}^{*}.
\]
Moreover, the total excess loss under uniform rank satisfies:
{\small
\[
\begin{aligned}
\sum_{\ell=1}^{L}
\left(
\mathcal{L}(\bm{\phi}_{r}^{*,\ell}) - \mathcal{L}(\bm{\phi}^{*,\ell})
\right)
\;\ge\;& \\
\frac{c(\varepsilon)}{2}
\sum_{\ell=1}^{L}
\max\!\left(0,\;\mathrm{erank}(\bm{F}_{\ell}) - r\right)^{2}
\lambda_{\min}(\bm{F}_{\ell})\,\|\bm{\phi}^{*,\ell}\|^{2}
\end{aligned}
\]
}
which is strictly positive whenever $r < \max_{\ell}\,
\mathrm{erank}(\bm{F}_{\ell})$.

\textbf{Remark 1.}
Theorem~\ref{thm:rq111} answers \textbf{RQ1} definitively. The excess loss 
bound grows with the gap between the uniform rank $r$ and the layer-wise 
effective Fisher rank layers that are under-ranked contribute a penalty 
proportional to $\lambda_{\min}(\bm{F}_{\ell})$, the smallest eigenvalue 
of their Fisher matrix. Crucially, no fixed choice of $r$ simultaneously 
minimizes this bound across all layers: setting $r$ high enough to cover 
the most informative layers wastes capacity on less informative ones, 
while setting $r$ low wastes the budget of informative layers.

\subsection{RQ2: Gradient Norms Induce a Structured Rank
Pattern Across Layers}
\label{sec:rq2}

We now show that the expected gradient norm at layer $l$
is monotonically non-increasing as the layer index moves
away from the loss, establishing that layer depth induces
a structured pattern in the quantity that governs rank need.

\subsubsection{Assumptions}

\begin{assumption}[Bounded, Lipschitz-continuous activation]
\label{ass:activation12}
Each transformer sub-layer applies an activation function
$\phi : \mathbb{R} \to \mathbb{R}$ that is
$\beta$-Lipschitz, i.e.\ $|\phi'(z)| \le \beta$ for all
$z \in \mathbb{R}$, with $0 < \beta \le 1$.
\end{assumption}


\begin{assumption}[Bounded weight matrices]
\label{ass:bounded-weights12}
For each layer $l$, the spectral norm of the weight
matrix satisfies $\|\bm{W}_{l}\|_{2} \le \gamma$ for
some finite constant $\gamma > 0$.
\end{assumption}




\subsubsection{Lemmas}

\begin{lemma}[Backpropagation recurrence]
\label{lem:backprop}
Let $\bm{g}_{l} = \nabla_{\bm{h}_{l}}\mathcal{L}$ denote
the gradient of the loss with respect to the hidden
state $\bm{h}_{l} \in \mathbb{R}^{d}$ at layer $l$.
Under Assumptions~\ref{ass:activation12}
and~\ref{ass:bounded-weights12}, the backward pass satisfies
the recurrence:
\[
  \bm{g}_{l-1}
  \;=\;
  \bm{J}_{l}^{\top}\bm{g}_{l},
  \quad
  l = L, L-1, \ldots, 2,
\]
where $\bm{J}_{l} = \frac{\partial \bm{h}_{l}}
{\partial \bm{h}_{l-1}} \in \mathbb{R}^{d \times d}$
is the Jacobian of layer $l$'s mapping, and
$\|\bm{J}_{l}\|_{2} \le \beta\gamma$.
\end{lemma}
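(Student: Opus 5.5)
The recurrence itself is the chain rule applied to the layered map, and the Jacobian bound comes from writing each layer's map and using submultiplicativity of the spectral norm. I model layer $l$ as $\bm{h}_{l} = \phi(\bm{z}_{l})$ with pre-activation $\bm{z}_{l} = \bm{W}_{l}\bm{h}_{l-1}$, where $\phi$ acts coordinatewise and $\bm{W}_{l}$ is the effective weight, i.e.\ the frozen weight plus the LoRA update, which is what Assumption~\ref{ass:bounded-weights12} is taken to bound. Since the loss depends on $\bm{h}_{l-1}$ only through $\bm{h}_{l}$ (layers are composed sequentially), the multivariate chain rule gives $\nabla_{\bm{h}_{l-1}}\mathcal{L} = (\partial \bm{h}_{l}/\partial \bm{h}_{l-1})^{\top}\nabla_{\bm{h}_{l}}\mathcal{L}$, which is exactly $\bm{g}_{l-1} = \bm{J}_{l}^{\top}\bm{g}_{l}$ for $l = L,\dots,2$. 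This part needs no assumptions beyond differentiability.

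For the norm bound, I compute $\bm{J}_{l} = \mathrm{diag}(\phi'(\bm{z}_{l}))\,\bm{W}_{l}$. The diagonal matrix has spectral norm $\max_i |\phi'(z_{l,i})| \le \beta$ by Assumption~\ref{ass:activation12}. Hence
\[
\|\bm{J}_{l}\|_{2} \le \|\mathrm{diag}(\phi'(\bm{z}_{l}))\|_{2}\,\|\bm{W}_{l}\|_{2} \le \beta\gamma,
\]
using Assumption~\ref{ass:bounded-weights12}. Transposition preserves the spectral norm, so the same bound holds for $\bm{J}_{l}^{\top}$. This is what the later monotonicity argument will use, via $\|\bm{g}_{l-1}\| \le \beta\gamma\|\bm{g}_{l}\|$.

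The main obstacle is that a real transformer block is not of the form $\phi(\bm{W}\bm{h})$. It has residual connections, layer normalization, and attention that is multiplicative in $\bm{h}_{l-1}$. A residual term alone contributes an identity to the Jacobian and would break the bound $\beta\gamma$ whenever $\beta\gamma < 1$. I would handle this by stating explicitly that the lemma applies to the abstracted sub-layer map posited in Assumption~\ref{ass:activation12}. For a composite block, I would define $\gamma$ as a bound on the spectral norm of the linearized block map, so that $\bm{J}_{l} = D_{l} M_{l}$ with $\|D_{l}\|_{2} \le \beta$ and $\|M_{l}\|_{2} \le \gamma$. The same submultiplicativity argument then goes through unchanged.
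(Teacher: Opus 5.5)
Your proposal is correct and matches the paper's proof. The chain rule yields $\bm{g}_{l-1} = \bm{J}_{l}^{\top}\bm{g}_{l}$, and writing $\bm{J}_{l} = \mathrm{diag}(\phi'(\cdot))\,\bm{W}_{l}$ with submultiplicativity of the spectral norm gives $\|\bm{J}_{l}\|_{2} \le \beta\gamma$. Your caveat is also well taken: the paper's proof says the map acts ``through a residual connection'' yet omits the resulting identity term from $\bm{J}_{l}$. So the bound really holds only for the abstracted map $\phi(\bm{W}_{l}\bm{h}_{l-1} + \bm{b}_{l})$ that both arguments analyze.
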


\begin{lemma}[Gradient norm is non-increasing toward
earlier layers]
\label{lem:grad-monotone}
Under Assumptions~\ref{ass:activation12},
\ref{ass:bounded-weights12},
if $\beta\gamma \le 1$, then for all $l \in
\{2, \ldots, L\}$:
\begin{equation}
\label{eq:grad-monotone}
  \|\bm{g}_{l-1}\|_{2}
  \;\le\;
  \|\bm{g}_{l}\|_{2}.
\end{equation}
Consequently, the sequence $(\|\bm{g}_{l}\|_{2})_{l=1}^{L}$
is monotonically non-decreasing from layer $1$ to layer $L$.
\end{lemma}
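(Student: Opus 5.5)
The plan is to combine the backpropagation recurrence of Lemma~\ref{lem:backprop} with submultiplicativity of the spectral norm, applied one layer at a time.

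\textbf{Step 1 (one-step contraction).} Fix $l \in \{2,\ldots,L\}$. By Lemma~\ref{lem:backprop}, $\bm{g}_{l-1} = \bm{J}_{l}^{\top}\bm{g}_{l}$. Since the operator norm of a matrix equals that of its transpose, $\|\bm{J}_{l}^{\top}\|_{2} = \|\bm{J}_{l}\|_{2} \le \beta\gamma$. Hence
\[
  \|\bm{g}_{l-1}\|_{2} \;\le\; \|\bm{J}_{l}^{\top}\|_{2}\,\|\bm{g}_{l}\|_{2} \;\le\; \beta\gamma\,\|\bm{g}_{l}\|_{2} \;\le\; \|\bm{g}_{l}\|_{2},
\]
where the last inequality uses the hypothesis $\beta\gamma \le 1$. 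This is exactly \eqref{eq:grad-monotone}.

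\textbf{Step 2 (monotonicity of the sequence).} Step~1 holds for every $l \in \{2,\ldots,L\}$, so chaining the inequalities gives $\|\bm{g}_{1}\|_{2} \le \|\bm{g}_{2}\|_{2} \le \cdots \le \|\bm{g}_{L}\|_{2}$. This is the claimed monotone non-decrease from layer $1$ to layer $L$. Unrolling also yields the quantitative bound $\|\bm{g}_{l}\|_{2} \le (\beta\gamma)^{L-l}\|\bm{g}_{L}\|_{2}$, which may be useful later in Theorem~\ref{thm:rq2}.

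\textbf{Main obstacle.} The substantive content sits entirely in the Jacobian bound $\|\bm{J}_{l}\|_{2} \le \beta\gamma$ supplied by Lemma~\ref{lem:backprop}, and I take that bound as given. If it had to be re-justified, I would write the layer map as $\bm{h}_{l} = \phi(\bm{W}_{l}\bm{h}_{l-1})$. Its Jacobian is then $\mathrm{diag}(\phi'(\bm{z}_{l}))\,\bm{W}_{l}$, whose norm is at most $\beta\gamma$ by Assumptions~\ref{ass:activation12} and~\ref{ass:bounded-weights12}. The genuinely delicate point is that real transformer blocks contain residual connections, attention, and layer normalization, which this simple model ignores. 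A residual term $\bm{I}$ added to the Jacobian would break the contraction, so the lemma should be read within the stated simplified layer model.

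Apart from this modelling caveat, the proof is a two-line consequence of the operator-norm inequality.
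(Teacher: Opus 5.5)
Your proposal is correct and matches the paper's proof essentially line for line. Both use the recurrence $\bm{g}_{l-1} = \bm{J}_{l}^{\top}\bm{g}_{l}$ from Lemma~\ref{lem:backprop}, the operator-norm bound with $\|\bm{J}_{l}^{\top}\|_{2} = \|\bm{J}_{l}\|_{2} \le \beta\gamma$, the hypothesis $\beta\gamma \le 1$, and chaining over $l = L, \ldots, 2$. Your residual-connection caveat is also apt, since the paper's proof of Lemma~\ref{lem:backprop} mentions a residual connection yet drops the identity term from the Jacobian $\mathrm{diag}(\phi'(\cdot))\,\bm{W}_{l}$. That issue lies upstream, though, and does not affect the present lemma, which takes the Jacobian bound as given.
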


\begin{lemma}[Gradient norm of weight parameters is
non-increasing toward earlier layers]
\label{lem:weight-grad-monotone}
Under the same assumptions as Lemma~\ref{lem:grad-monotone},
the expected squared Frobenius norm of the gradient
with respect to the weight matrix $\bm{W}_{l}$ satisfies:
\[
  \mathbb{E}\!\left[
    \|\nabla_{\bm{W}_{l-1}}\mathcal{L}\|_{F}^{2}
  \right]
  \;\le\;
  (\beta\gamma)^{2}
  \mathbb{E}\!\left[
    \|\nabla_{\bm{W}_{l}}\mathcal{L}\|_{F}^{2}
  \right].
\]
\end{lemma}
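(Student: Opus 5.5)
The plan is to write the weight gradient at each layer as an outer product, and then chain it to the next layer through the recurrence of Lemma~\ref{lem:backprop}. For a single sample, take the layer map $\bm{h}_{l} = \phi(\bm{W}_{l}\bm{h}_{l-1})$. Let $\bm{z}_{l} = \bm{W}_{l}\bm{h}_{l-1}$ and $\bm{D}_{l} = \mathrm{diag}(\phi'(\bm{z}_{l}))$. Then
\[
  \nabla_{\bm{W}_{l}}\mathcal{L} = \bm{D}_{l}\bm{g}_{l}\,\bm{h}_{l-1}^{\top},
  \qquad
  \|\nabla_{\bm{W}_{l}}\mathcal{L}\|_{F} = \|\bm{D}_{l}\bm{g}_{l}\|_{2}\,\|\bm{h}_{l-1}\|_{2},
\]
since the Frobenius norm of a rank-one matrix is the product of the vector norms. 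Similarly, $\|\nabla_{\bm{W}_{l-1}}\mathcal{L}\|_{F} = \|\bm{D}_{l-1}\bm{g}_{l-1}\|_{2}\,\|\bm{h}_{l-2}\|_{2}$. So the claim reduces to comparing these two products, factor by factor.

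\textbf{Gradient factor.} By Lemma~\ref{lem:backprop}, $\bm{g}_{l-1} = \bm{J}_{l}^{\top}\bm{g}_{l}$ with $\bm{J}_{l} = \bm{D}_{l}\bm{W}_{l}$. Hence $\bm{g}_{l-1} = \bm{W}_{l}^{\top}(\bm{D}_{l}\bm{g}_{l})$, and Assumption~\ref{ass:bounded-weights12} gives $\|\bm{g}_{l-1}\|_{2} \le \gamma\|\bm{D}_{l}\bm{g}_{l}\|_{2}$. Assumption~\ref{ass:activation12} gives $\|\bm{D}_{l-1}\|_{2}\le\beta$. Together,
\[
  \|\bm{D}_{l-1}\bm{g}_{l-1}\|_{2} \le \beta\gamma\,\|\bm{D}_{l}\bm{g}_{l}\|_{2}.
\]
This is exactly the factor $\beta\gamma$ we need. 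Note that we apply $\beta$ to $\bm{D}_{l-1}$, not to $\bm{D}_{l}$, so that $\bm{D}_{l}\bm{g}_{l}$ survives intact on the right and matches the layer-$l$ gradient.

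\textbf{Activation factor.} We then need $\|\bm{h}_{l-2}\|_{2} \le \|\bm{h}_{l-1}\|_{2}$. This is the main obstacle, because the stated assumptions only give the opposite direction, $\|\bm{h}_{l-1}\|\le\beta\gamma\|\bm{h}_{l-2}\|$ when $\phi(0)=0$. I would first try to exploit the transformer structure, which the paper's setup allows. Each sub-layer is followed by LayerNorm, so $\|\bm{h}_{l}\|_{2}=\sqrt{d}$ is constant across layers (absorbing the affine gain into $\bm{W}$). The activation norms then cancel exactly. If that is not accepted, the fallback is to state explicitly an added mild assumption of norm-preserving (normalized) hidden states, $\|\bm{h}_{l-2}\|\le\|\bm{h}_{l-1}\|$ almost surely. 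Any bounded ratio $\kappa$ here would change the constant to $(\kappa\beta\gamma)^2$.

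\textbf{Conclusion.} With both factors controlled, for each sample we have
\[
  \|\nabla_{\bm{W}_{l-1}}\mathcal{L}\|_{F}^{2} \le (\beta\gamma)^{2}\,\|\nabla_{\bm{W}_{l}}\mathcal{L}\|_{F}^{2}.
\]
Taking expectation over $(x,y)\sim\mathcal{D}$ preserves the inequality by monotonicity of expectation, which yields the lemma. Under $\beta\gamma\le1$ this also gives the non-increasing trend toward earlier layers, consistent with Lemma~\ref{lem:grad-monotone}.
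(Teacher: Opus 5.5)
Your proof follows the same route as the paper's: write each weight gradient as a rank-one outer product, contract the backpropagated factor by $\beta\gamma$ via Lemma~\ref{lem:backprop}, match the activation factors by appealing to normalization, and take expectations. It is more careful than the paper in two places.

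\emph{The activation-derivative factor.} The paper writes $\nabla_{\bm{W}_{l}}\mathcal{L}=\bm{g}_{l}\bm{h}_{l-1}^{\top}$. That is the linear-layer formula, and it is inconsistent with the paper's own Jacobian $\mathrm{diag}(\phi')\bm{W}_{l}$. Once $\bm{D}_{l}$ is restored, the paper's step $\|\bm{g}_{l-1}\|_{2}\le\beta\gamma\|\bm{g}_{l}\|_{2}$ is no longer enough, because $\|\bm{D}_{l}\bm{g}_{l}\|_{2}$ can be much smaller than $\|\bm{g}_{l}\|_{2}$. Your pairing fixes this: $\beta$ comes from $\bm{D}_{l-1}$, and $\gamma$ comes from $\bm{W}_{l}^{\top}$ acting on $\bm{D}_{l}\bm{g}_{l}$.

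\emph{The activation-norm step.} The paper only asserts $\|\bm{h}_{l-2}\|_{2}\approx\|\bm{h}_{l-1}\|_{2}$ ``as enforced by Pre-LN'' and then concludes an inequality. You isolate the one-sided inequality that is actually needed and state it as a hypothesis.

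That extra hypothesis is unavoidable: under Assumptions~\ref{ass:activation12} and~\ref{ass:bounded-weights12} alone the lemma is false. Take a scalar chain $h_{0}=x$, $h_{l}=W_{l}h_{l-1}$ with $\phi=\mathrm{id}$ (so $\beta=1$), $W_{1}=W_{2}=\tfrac12$ (so $\gamma=\tfrac12$), and $\mathcal{L}=h_{2}$. Then $g_{2}=1$, $g_{1}=\tfrac12$ and $h_{1}=x/2$, so $\nabla_{W_{2}}\mathcal{L}=\nabla_{W_{1}}\mathcal{L}=x/2$. The claimed bound therefore fails by the factor $(\beta\gamma)^{-2}=4$.

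Of your two justifications, keep the explicit assumption and drop the LayerNorm one. If LayerNorm is folded into the layer map, $\bm{J}_{l}$ acquires the LayerNorm Jacobian. Its norm is the same factor $\sqrt{d}/\|\cdot\|_{2}$ (applied to the centred input) that rescales the forward norm back up. So neither $\|\bm{J}_{l}\|_{2}\le\beta\gamma$ nor your contraction $\|\bm{D}_{l-1}\bm{g}_{l-1}\|_{2}\le\beta\gamma\|\bm{D}_{l}\bm{g}_{l}\|_{2}$ follows any more.

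Finally, write the layer as $\bm{h}_{l}=\phi(\bm{W}_{l}\bm{h}_{l-1}+\bm{b}_{l})$, as the paper does in the proof of Lemma~\ref{lem:backprop}. This changes none of your gradient formulas. Without the bias and with $\phi(0)=0$, your own forward bound $\|\bm{h}_{l-1}\|_{2}\le\beta\gamma\|\bm{h}_{l-2}\|_{2}$ combined with the added hypothesis forces $\bm{h}_{l-2}=\bm{0}$ whenever $\beta\gamma<1$, so the lemma would hold only trivially.
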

\subsubsection{Main Theorem for RQ2}

\begin{theorem}[Gradient norms induce a structured,
monotonic rank pattern]
\label{thm:rq2}
\end{theorem}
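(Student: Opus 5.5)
The statement of Theorem~\ref{thm:rq2} sits outside the environment in the source, so I take its content from the surrounding text. It should combine Lemmas~\ref{lem:grad-monotone} and~\ref{lem:weight-grad-monotone} with the empirical-Fisher identity promised earlier (``Theorem~2, Part~2''). It should conclude three things: (i) the expected squared gradient norm at layer $l$ is non-decreasing in $l$; (ii) this quantity equals (or consistently estimates) the Fisher trace $\mathrm{tr}(\bm{F}_l)$; (iii) any rank rule that is monotone in the Fisher proxy therefore yields a monotone rank schedule $r_1 \le r_2 \le \cdots \le r_L$. I would prove the parts in that order.

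\textbf{Part 1 (monotone gradient energy).} Start from Lemma~\ref{lem:weight-grad-monotone}, which gives
\[
\mathbb{E}\big[\|\nabla_{\bm{W}_{l-1}}\mathcal{L}\|_F^2\big] \le (\beta\gamma)^2\,\mathbb{E}\big[\|\nabla_{\bm{W}_l}\mathcal{L}\|_F^2\big].
\]
With $\beta\gamma\le 1$ this is a one-step non-increase toward earlier layers. Iterating it gives the explicit telescoped bound $\mathbb{E}\|\nabla_{\bm{W}_{l}}\mathcal{L}\|_F^2 \le (\beta\gamma)^{2(L-l)}\,\mathbb{E}\|\nabla_{\bm{W}_L}\mathcal{L}\|_F^2$.

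To pass to LoRA parameters, I use the chain rule:
\[
\nabla_{A_l}\mathcal{L}=B_l^\top \nabla_{\bm{W}_l}\mathcal{L}, \qquad \nabla_{B_l}\mathcal{L}=\nabla_{\bm{W}_l}\mathcal{L}\,A_l^\top.
\]
This bounds the adapter gradient norms by $\|B_l\|_2$ and $\|A_l\|_2$ times the weight-gradient norm. Monotonicity across layers then needs either uniform adapter norms or the interpretation that the schedule is driven by the weight-level gradient. I would state the theorem at the weight level and add a remark for the adapters.

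\textbf{Part 2 (Fisher proxy).} When $\mathcal{L}$ is the negative log-likelihood, $\nabla_{\theta_l}\mathcal{L}=-\nabla_{\theta_l}\log p(y|x;\theta)$. Taking the trace of the definition of $\bm{F}_l$ then gives
\[
\mathrm{tr}(\bm{F}_l)=\mathbb{E}\,\|\nabla_{\theta_l}\log p\|^2,
\]
using $\mathrm{tr}(vv^\top)=\|v\|^2$ and linearity of expectation. The minibatch average of squared gradient norms is an unbiased, i.i.d.\ sample mean of this quantity, so the strong law of large numbers gives consistency under a finite-second-moment condition. Combined with Part 1, $\mathrm{tr}(\bm{F}_l)$ is non-decreasing in $l$.

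\textbf{Part 3 (rank schedule).} Define the allocation $r_l = \pi(\widehat{\mathrm{tr}}\,\bm{F}_l)$ for any non-decreasing map $\pi$, such as the log-compressed normalized budget split LAARA uses. Composition with a monotone function preserves order, which gives the monotone schedule. Ties and the effects of rounding are handled by non-strictness.

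\textbf{Main obstacle.} The hardest step is Part 1's transfer from hidden-state gradients to weight gradients, which Lemma~\ref{lem:weight-grad-monotone} hides. The per-layer weight gradient is $\bm{g}_l\bm{h}_{l-1}^\top$, so its norm also involves activation norms $\|\bm{h}_{l-1}\|$ that need not be monotone. If the lemma is used as given this is inherited; otherwise I would add an assumption that activations are norm-bounded uniformly (e.g.\ via LayerNorm) and absorb it into the constant.

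A second subtlety is that the statement is about \emph{expected squared} norms. The pathwise inequality of Lemma~\ref{lem:grad-monotone} must be squared before expectation is taken, and that is legitimate because it holds almost surely.
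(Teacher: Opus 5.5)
Your Parts 1 and 2 follow the paper's proof. You iterate Lemma~\ref{lem:weight-grad-monotone} to get Eq.~\eqref{eq:gradient-chain}, then identify $\mathrm{tr}(\bm{F}_l)$ with the expected squared gradient norm to get Eq.~\eqref{eq:fisher-trace-chain}. You are also more careful than the paper about the mismatch between gradients in $\bm{W}_l$ and in $\{A_l,B_l\}$. The gap is Part 3. The theorem's final claim, Eq.~\eqref{eq:rank-pattern}, concerns the \emph{optimal} ranks $r_l^*$ of Definition~\ref{def:optimal-rank}, derived ``from Lemma~\ref{lem:fim-rank123}''. You instead define $r_l=\pi(\widehat{\mathrm{tr}}\,\bm{F}_l)$ for a monotone $\pi$, which is ordered by construction. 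That shows LAARA's heuristic inherits the trace ordering. But nothing in Parts 1--2 connects $\mathrm{tr}(\bm{F}_l)$ to $r_l^*$, so the main conclusion is not proved.

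You also cannot close this by following the paper. Its Part 3 combines the lower bound $r_l^*\ge c(\varepsilon)\,\mathrm{erank}(\bm{F}_l)$ with the inequality $\mathrm{erank}(\bm{M})\le\mathrm{tr}(\bm{M})/\|\bm{M}\|_2$ to order the effective ranks. It then writes $r_l^*\ge c\,\mathrm{erank}(\bm{F}_l)\le c\,\mathrm{erank}(\bm{F}_{l+1})\le r_{l+1}^*$. Each piece fails:
\begin{itemize}
\item The inequality is reversed. Shannon entropy dominates min-entropy, so $\mathrm{erank}(\bm{M})\ge\mathrm{tr}(\bm{M})/\|\bm{M}\|_2$.
\item $\mathrm{erank}$ depends only on the normalized spectrum, so a trace inequality cannot order effective ranks, even at equal spectral norm. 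For example, with $\beta\gamma=1$, compare $\mathrm{diag}(1,\tfrac12,\tfrac12)$ with $\mathrm{diag}(1,1,\delta)$ for small $\delta$. The trace rises from $2$ to $2+\delta$, while $\mathrm{erank}$ falls from $2\sqrt2$ to about $2$.
\item Two lower bounds can never yield $r_l^*\le r_{l+1}^*$.
\end{itemize}
Proving Eq.~\eqref{eq:rank-pattern} needs an \emph{upper} bound on $r_l^*$ in terms of a layer statistic that Parts 1--2 actually order. That in turn needs assumptions on $\bm{\phi}^{*,l}$ and on the shape of the Fisher spectrum well beyond Assumptions~\ref{ass:activation12}--\ref{ass:bounded-weights12}. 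Otherwise, state explicitly that your (iii) concerns the allocation rule, not $r_l^*$.

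A smaller point concerns your fallback for the activation norms. Suppose you replace the paper's approximation $\|\bm{h}_{l-2}\|\approx\|\bm{h}_{l-1}\|$ with uniform bounds $a\le\|\bm{h}\|\le b$ and absorb them into the constant. The one-step constant then becomes $(\beta\gamma\,b/a)^2$, so monotonicity needs $\beta\gamma\,b/a\le1$, not merely $\beta\gamma\le1$.
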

Under Assumptions~\ref{ass:activation12},
\ref{ass:bounded-weights12},
with $\beta\gamma \le 1$, the expected gradient
Frobenius norm satisfies
{\small
\begin{equation}
\label{eq:gradient-chain}
  \mathbb{E}\!\left[
    \|\nabla_{\bm{W}_{l}}\mathcal{L}\|_{F}^{2}
  \right]
  \;\le\;
  (\beta\gamma)^{2(L-l)}\,
  \mathbb{E}\!\left[
    \|\nabla_{\bm{W}_{L}}\mathcal{L}\|_{F}^{2}
  \right]
  \quad \forall\, l \le L.
\end{equation}
}
Since the expected gradient Frobenius norm is a
consistent estimator of $\mathrm{tr}(\bm{F}_{l})$,
the layer-wise Fisher trace satisfies:
\begin{equation}
\label{eq:fisher-trace-chain}
  \mathrm{tr}(\bm{F}_{l})
  \;\le\;
  (\beta\gamma)^{2(L-l)}\,\mathrm{tr}(\bm{F}_{L})
  \quad \forall\, l \le L.
\end{equation}
Consequently, from lemma~\ref{lem:fim-rank123} the
optimal rank $r_{l}^{*}$ satisfies:
\begin{equation}
\label{eq:rank-pattern}
  r_{l}^{*}
  \;\le\;
  r_{l+1}^{*}
  \quad \forall\, l \in \{1,\ldots,L-1\},
\end{equation}
i.e.\ the optimal rank is \emph{non-decreasing} from
the first layer to the last, forming a structured
monotonic profile with depth. The complete proofs of the lemmas and theorems are provided in Appendix Section~\ref{sec:rq1c}.

\section{Proposed Method}
\label{sec:method}
Building on the empirical and theoretical analysis above, we present
LAARA, which allocates layer-wise adapter capacity from Fisher
information estimates computed during training across four components: dynamic LoRA parameterization, Fisher information estimation, adaptive rank allocation, and a vote-based dampening mechanism.

\subsection{Dynamic LoRA Parameterization}
\label{sec:dynamic_lora}

LoRA \cite{hu2022lora} adapts a pretrained weight matrix
$W_0 \in \mathbb{R}^{m \times n}$ by introducing a low-rank update
$\Delta W = BA$, where
$B \in \mathbb{R}^{d_{\text{out}} \times r}$ and
$A \in \mathbb{R}^{r \times d_{\text{in}}}$ with
$r \ll \min(m,n)$. Unlike standard LoRA, which employs a fixed uniform rank across all layers, our method dynamically adjusts the rank during training. The adapted weight matrix is represented as:
\[
W = W_0 + \frac{\alpha}{r}\Delta W,
\qquad
\Delta W = BA,
\]
where $\alpha$ is a scaling hyperparameter used to maintain stable output magnitudes across different rank values. For each targeted linear projection
$W \in \mathbb{R}^{d_{\text{out}} \times d_{\text{in}}}$,
LoRA introduce trainable low-rank adapter matrices
$A \in \mathbb{R}^{r \times d_{\text{in}}}$
and
$B \in \mathbb{R}^{d_{\text{out}} \times r}$.
The forward pass is defined as:
\begin{equation}
  h = Wx + \frac{\alpha}{r}\,BAx,
  \label{eq:lora_forward}
\end{equation}
where the pretrained weight matrix $W$ remains frozen throughout training.
Following the standard LoRA initialization strategy, $A$ is initialized from a normal distribution while $B$ is initialized to zero, ensuring that the adapter initially contributes no perturbation to the pretrained model.




\paragraph{Rank adaptation.}
When layer $\ell$ is updated from rank $r$ to $r'$, adapter matrices
are resized in-place. For rank increases ($r' > r$), the additional
$r'-r$ rows of $A$ are randomly initialised and corresponding columns
of $B$ set to zero, preserving learned directions while adding
capacity. For rank decreases ($r' < r$), components are pruned by
importance score:
\begin{equation}
  \phi_i = \|B[:,i]\|_2 \cdot \|A[i,:]\|_2, \quad i = 1,\ldots,r,
  \label{eq:importance}
\end{equation}
retaining only the top-$r'$ components, consistent with prior
low-rank pruning~\citep{hu2022lora,zhang2023adalora}. To determine
how capacity should be redistributed across layers, we estimate
layer sensitivity via empirical Fisher information derived from
adapter gradients.

\subsection{Fisher Information Estimation}
\label{sec:fisher}

The diagonal Fisher information for a parameter $\theta$ under loss $\mathcal{L}$ is defined as:
\begin{equation}
  \mathcal{F}(\theta) =
  \mathbb{E}\!\left[
  \left(
  \frac{\partial \mathcal{L}}{\partial \theta}
  \right)^2
  \right].
  \label{eq:fisher_diag}
\end{equation}

We approximate $\mathcal{F}$ using the empirical trace of squared gradients over a mini-batch:
\begin{equation}
  F_l^{(t)} =
  \sum_i \left(g_i^{(t)}\right)^2,
  \label{eq:fisher_trace}
\end{equation}
where $g_i^{(t)}$ denotes the gradient of the loss with respect to the $i^{\text{th}}$ LoRA parameter in layer $l$ at optimization step $t$. We maintain separate Fisher estimates for the $A$ and $B$ adapter matrices, denoted by $F_{l,A}^{(t)}$ and $F_{l,B}^{(t)}$, respectively. To reduce gradient variance and stabilize importance estimation, both quantities are tracked using an exponential moving average (EMA):
\begin{equation}
  \hat{F}_{l,\cdot}^{(t)}
  =
  \beta \hat{F}_{l,\cdot}^{(t-1)}
  +
  (1-\beta)F_{l,\cdot}^{(t)},
  \label{eq:ema}
\end{equation}
where $\beta \in (0,1)$ denotes the EMA decay factor.
Bias correction is further applied to compensate for initialization effects during early optimization steps:
\begin{equation}
  \tilde{F}_{l,\cdot}^{(t)}
  =
  \frac{\hat{F}_{l,\cdot}^{(t)}}{1-\beta^t}.
  \label{eq:bias_correction}
\end{equation}

The resulting Fisher estimates provide a measure of layer-wise adaptation sensitivity. However, raw Fisher values often exhibit substantial scale variations across projection types, making them unsuitable for direct rank assignment. We therefore introduce a normalized and compressed scoring mechanism to derive stable rank allocation signals.

\subsection{Rank Allocation}
\label{sec:rank_alloc}

\paragraph{Per-projection normalization.}
Raw Fisher traces can differ significantly across projection types (e.g., query versus value projections), causing certain projections to dominate the allocation process. To mitigate this issue, we normalize Fisher scores independently within each projection type
$p \in \{\texttt{query\_proj}, \texttt{value\_proj}, \ldots\}$:
\begin{equation}
  s_{l,p}
  =
  \frac{
  \tilde{F}_{l,p}
  -
  \min_{l'} \tilde{F}_{l',p}
  }{
  \max_{l'} \tilde{F}_{l',p}
  -
  \min_{l'} \tilde{F}_{l',p}
  +
  \epsilon
  },
  \label{eq:norm}
\end{equation}
yielding normalized per-projection scores
$s_{l,p} \in [0,1]$.


\paragraph{Log compression.}
To reduce the influence of extreme outliers and encourage smoother rank distributions, we apply logarithmic compression:
\begin{equation}
  \hat{s}_{l,p}
  =
  \frac{
  \log(1+\gamma s_{l,p})
  }{
  \log(1+\gamma)
  },
  \label{eq:log_compress}
\end{equation}
where $\gamma > 0$ controls the compression strength. This transformation preserves the relative ordering of layers while attenuating disproportionately large Fisher responses.


\paragraph{Blended signal.}
To obtain a unified layer-wise importance estimate, we combine the compressed scores derived from the $A$ and $B$ adapters using weighted aggregation:
\begin{equation}
  \hat{s}_l
  =
  \alpha_b \hat{s}_{l,A}
  +
  (1-\alpha_b)\hat{s}_{l,B},
  \label{eq:blend}
\end{equation}
where $\alpha_b \in [0,1]$ controls the relative contribution of the two adapter components. This blended formulation captures complementary adaptation dynamics from both low-rank projection matrices, resulting in a more stable and representative importance signal for rank allocation. Setting $\alpha_b = 1$ yields a signal derived solely from the $A$ adapter, while $\alpha_b = 0.5$ assigns equal importance to both matrices. Each projection type is normalized independently before blending to avoid scale inconsistencies.



\paragraph{Integer rank assignment.}
The blended importance score is subsequently transformed into a discrete layer-wise LoRA rank using a bounded linear mapping:
{\small
\begin{equation}
  r_l
  =
  \operatorname{clip}
  \left(
  \operatorname{round}
  \left(
  r_{\min}
  +
  (r_{\max}-r_{\min})\hat{s}_l
  \right),
  r_{\min},
  r_{\max}
  \right),
  \label{eq:rank_assign}
\end{equation}
}
where $r_{\min}$ and $r_{\max}$ denote the minimum and maximum allowable adapter ranks, respectively.

This formulation allocates larger ranks to Fisher-sensitive layers while compressing under-utilized layers, improving parameter efficiency without degrading downstream performance. Although the adaptive allocation strategy dynamically redistributes representational capacity during training, noisy gradient estimates may still induce unstable rank fluctuations. To address this issue, we introduce a vote-based dampening mechanism that stabilizes rank transitions over time.


\subsection{Vote-to-Change Dampening}
\label{sec:dampening}

Frequent rank changes can destabilize optimization because each update partially disrupts optimizer momentum associated with the affected parameters. To suppress rapid oscillations, we introduce a \emph{vote-to-change} mechanism in which a rank update for layer $l$ is committed only when the same target rank is proposed consistently for $\tau$ consecutive update steps:
\begin{equation}
  r_l^{\star} =
  \begin{cases}
    r_l, & \text{if the last } \tau \text{ proposals equal } r_l, \\
    r_l^{\text{current}}, & \text{otherwise.}
  \end{cases}
  \label{eq:dampening}
\end{equation}

Once a rank update is accepted, the corresponding vote buffer is reset, requiring the layer to accumulate another sequence of $\tau$ consistent proposals before any subsequent modification can occur. This hysteresis-inspired mechanism suppresses cyclic rank oscillations caused by greedy updates and leads to more stable adaptive rank convergence. The algorithm of the LAARA framework is described in Algorithm~\ref{alg:laara}.

\begin{figure}[t]
    \centering
    \includegraphics[width=0.48\textwidth]{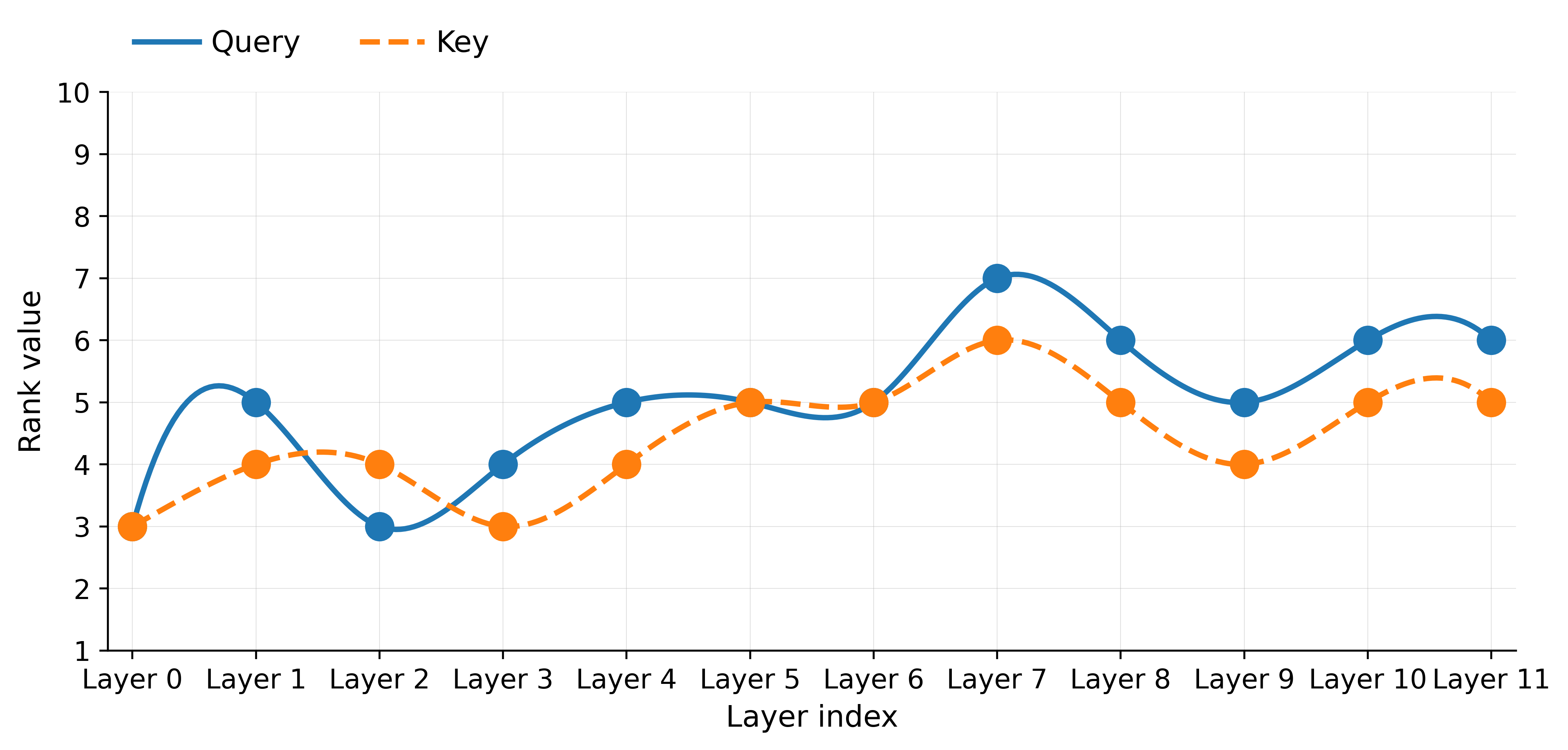}
    \caption{Final rank value vs layer index on STS-B dataset.}
    \label{fig3xyz}
\end{figure}

\begin{table*}[t]
\centering
\small
\setlength{\tabcolsep}{4pt}

\begin{tabular}{lccccccccc}
\toprule

\textbf{Method} &
\textbf{\# Params} &
\textbf{MNLI} &
\textbf{SST-2} &
\textbf{CoLA} &
\textbf{QQP} &
\textbf{QNLI} &
\textbf{RTE} &
\textbf{MRPC} &
\textbf{STS-B} \\

&
&
\textbf{m/mm} &
\textbf{Acc} &
\textbf{MCC} &
\textbf{F1} &
\textbf{Acc} &
\textbf{Acc} &
\textbf{F1} &
\textbf{Pearson} \\

\midrule

Full FT
& 185,310,724
& \textbf{91.50/91.60}
& 96.10
& 62.23
& 89.50
& 93.80
& 76.89
& 89.90
& 91.90 \\

BitFit
& 619,778
& 89.10/88.90
& 94.04
& 66.49
& 83.86
& 89.35
& 78.70
& 90.94
& 91.14 \\

LoRA
& 887,042
& 90.55/90.32
& 94.83
& 68.00
& 85.40
& 93.30
& 85.20
& 92.55
& 91.64 \\

AdaLoRA
& 1,330,563
& 90.88/90.71
& 95.64
& 66.79
& 88.71
& 91.48
& 81.59
& 93.24
& 91.86 \\

DyLoRA
& 887,042 
& 89.99/89.72
& 95.76
& 65.67
& 89.40
& 92.55
& 85.92
& 89.22
& 90.87 \\

\midrule

\textbf{LAARA (Ours)}
& \textbf{851,522}
& 91.42/91.18
& \textbf{96.02}
& \textbf{69.18}
& \textbf{89.86}
& \textbf{94.11}
& \textbf{87.73}
& \textbf{93.45}
& \textbf{91.65} \\

\bottomrule
\end{tabular}

\caption{Comparison of parameter-efficient fine-tuning methods on GLUE benchmarks.}
\label{tab:glue_results}

\end{table*}

\begin{table}[t]
\centering
\small
\setlength{\tabcolsep}{6pt}

\begin{tabular}{lccccc}
\toprule

\textbf{Method} &
\textbf{\# Params} &
\textbf{CoLA} &
\textbf{STS-B} &
\textbf{RTE} \\

&
&
\textbf{MCC} &
\textbf{Pearson} &
\textbf{Acc} \\

\midrule

BitFit
& 619,778
& 61.65
& 89.47
& 73.29 \\

LoRA
& 1,034,498
& 68.00
& 91.62
& 84.48 \\

DyLoRA
& 1,034,498
& 63.20
& 91.19
& 87.73 \\

AdaLoRA
& 1,256,114
& 69.20
& 91.87
& 80.51 \\

\midrule

\textbf{LAARA (Ours)}
& \textbf{987,042}
& \textbf{68.88}
& \textbf{91.91}
& \textbf{88.09} \\

\bottomrule
\end{tabular}

\caption{Performance comparison of parameter-efficient fine-tuning methods using query, key, value, and output projection matrices ($W_q$, $W_k$, $W_v$, $W_o$) as target modules.}
\label{tab:target_modules_results}

\end{table}

\section{Experiments}
\label{sec:experiments}

\subsection{Experimental Setup}
\label{sec:setup}

\paragraph{Model and datasets.}
\textls[-20]{We evaluated the proposed LAARA approach and the baselines on DeBERTa-v3-base~\citep{he2023debertav3improvingdebertausing} and Llama-3.2-3B~\citep{grattafiori2024llama3herdmodels} and fine-tuned across all eight tasks of the GLUE benchmark~\citep{wang-etal-2018-glue}: MNLI, SST-2, CoLA, QQP, QNLI, RTE, MRPC, and STS-B, and on Mathintruct Dataset~\citep{yue2023mammothbuildingmathgeneralist}. 
The implementation details and baselines are provided in the Section~\ref{sec:impl_details} of the Appendix.}

\begin{table}[t]
\centering
\small
\begin{tabular}{lcc}
\toprule
\textbf{Method} & \textbf{\# Params} & \textbf{Accuracy (\%)} \\
\midrule
BitFit          & 98,304    & 19.40 \\
LoRA            & 4,587,520 & 24.50 \\
DyLoRA          & 4,587,520 & 23.10 \\
AdaLoRA         & 5,160,960 & 23.80 \\
\midrule
\textbf{LAARA (Ours)} & \textbf{3,808,512} & \textbf{27.20} \\
\bottomrule
\end{tabular}
\caption{Performance comparison on MathInstruct dataset using
Llama-3.2-3B.}
\label{tab:mathinstruct}
\end{table}

\begin{table}[t]
\centering
\small
\setlength{\tabcolsep}{6pt}

\begin{tabular}{lcccc}
\toprule

\textbf{Method} &
\textbf{\# Params} &
\textbf{CoLA} &
\textbf{STS-B} &
\textbf{RTE} \\

&
&
\textbf{MCC} &
\textbf{Pearson} &
\textbf{Acc} \\

\midrule

BitFit
& 619,778
& 61.65
& 89.47
& 73.29 \\

LoRA
& 8,275,984
& 70.60
& 91.67
& 79.78 \\

DyLoRA
& 8,275,984
& 67.92
& 91.42
& 82.31 \\

AdaLoRA
& 8,942,336
& 70.11
& 91.88
& 78.94 \\

\midrule

\textbf{LAARA (Ours)}
& \textbf{6,104,512}
& \textbf{71.08}
& \textbf{92.01}
& \textbf{84.66} \\

\bottomrule
\end{tabular}













\caption{Performance comparison with rank $r=64$ using query, key, value, and output projection matrices ($W_q$, $W_k$, $W_v$, $W_o$) as target modules.}
\label{tab:r64_results}

\end{table}



\subsection{Results and Analysis}
\label{sec:main_results}

\textls[-20]{LAARA achieves the best performance across all GLUE tasks
(Table~\ref{tab:glue_results}), with the strongest gains on reasoning-intensive
tasks. On RTE, LAARA reaches 87.73\% accuracy, outperforming LoRA by
+2.53\% and AdaLoRA by +6.14\% at identical parameter budget. On MNLI,
QQP, and QNLI, LAARA consistently leads all PEFT baselines, and notably
surpasses full fine-tuning by using a smaller parameters~\citep{hu2022lora, zhang2023adalora}. AdaLoRA underperforms LAARA on five tasks despite its SVD-based rank pruning, most notably on RTE ($-$6.14\%) and QNLI ($-$2.63\%), which we
attribute to the sensitivity of singular value importance scores to
initialization and regularization on small datasets. DyLoRA similarly
trails LAARA on most tasks, confirming that rank diversity alone without
principled layer-wise allocation is insufficient. By contrast, LAARA
derives rank assignments directly from Fisher information estimates,
yielding a statistically grounded, search-free strategy that is robust
across task sizes and domains. 
As shown in Fig.~\ref{fig3xyz} on the STS-B dataset, the final layer-wise ranks learned after training strongly validate our motivation. The proposed Fisher-guided LAARA framework consistently assigns lower ranks to early layers and progressively higher ranks to deeper layers, exhibiting a clear monotonic adaptation pattern across the network. We have discussed the result for Table~\ref{tab:mathinstruct} in the section~\ref{sec:mathinstruct} in the Appendix.}

\subsection{Parameter Efficiency}
\label{sec:param_efficiency}
LAARA consistently reduces trainable parameters while improving
performance across all settings. With $W_q, W_k, W_v, W_o$ as
target modules (Table~\ref{tab:target_modules_results}), LAARA
uses 987,042 parameters, a 4.6\% fewer than LoRA/DyLoRA
(1,034,498) and 21.4\% fewer than AdaLoRA (1,256,114), while
achieving the best results on RTE (88.09\%) and STS-B (91.91\%).
At $r{=}64$ (Table~\ref{tab:r64_results}), the gap widens:
LAARA uses 6,104,512 parameters, a 26.2\% fewer than LoRA/DyLoRA
(8,275,984) and 31.7\% fewer than AdaLoRA (8,942,336), while
outperforming all baselines on every task. On the generative
reasoning setting (Table~\ref{tab:mathinstruct}), LAARA uses
3,808,512 parameters, a 17.0\% fewer than LoRA/DyLoRA
(4,587,520) and 26.2\% fewer than AdaLoRA (5,160,960),
while achieving the highest accuracy (27.20\%). These reductions
follow directly from Fisher-guided allocation: layers with low
Fisher trace are assigned $r_{\min}$, reclaiming capacity that
uniform-rank methods waste on less informative layers, with
savings growing as the rank budget increases.

\section{Conclusion}
\label{sec:conclusion}
We presented LAARA, a parameter-efficient fine-tuning framework that
allocates layer-wise ranks according to Fisher information
estimates, without iterative search or auxiliary optimization. By
concentrating adapter capacity in layers with high loss-landscape
curvature and compressing it elsewhere, LAARA addresses the provable
suboptimality of uniform rank allocation. A combination of EMA-smoothed Fisher
estimation, projection-wise normalization, logarithmic compression,
and vote-based dampening ensures that rank transitions are both
principled and stable throughout training. Experimental results show that LAARA consistently matches or outperforms state of the art baselines at the same parameter budget, with particularly strong gains on reasoning-intensive tasks such as RTE.




\section{Limitations}
\label{sec:limitations}

While LAARA consistently outperforms existing PEFT baselines, the
diagonal Fisher approximation discards off-diagonal curvature and may
underestimate true layer importance in some settings. Key
hyperparameters including the rank update interval and
\texttt{Lora A}/\texttt{Lora B} blend coefficient are fixed across
tasks without extensive ablation. Future work will explore richer
Fisher approximations, adaptive scheduling, and broader
multi-architecture evaluations.

\bibliography{custom}

\appendix

\section{Appendix}
\label{sec:appendix}

\subsection{Related Work}
\label{sec:related_work}
The increasing size of LLMs has made direct fine-tuning costly and often impractical, driving the development of a wide range of PEFT methods that enhance task-specific performance by introducing a small set of trainable parameters while keeping most pre-trained parameters fixed. Approaches like adapters~\cite{houlsby2019parameter}, prefix-tuning~\cite{li2021prefix}, and prompt-tuning~\cite{lester2021power} exemplify this paradigm by integrating lightweight modules or augmenting input and hidden layers. Among PEFT approaches, LoRA~\cite{hu2022lora} established a highly influential paradigm by decomposing weight updates into a pair of low-rank matrices while freezing the original pretrained weights. LoRA demonstrated that only a small fraction of trainable parameters is sufficient to achieve performance comparable to full fine-tuning across a variety of NLP tasks, while also avoiding additional inference latency through weight merging after training. This simplicity, efficiency, and strong empirical performance made LoRA the foundation for a rapidly growing family of PEFT methods. 

Despite its effectiveness, LoRA employs a fixed rank across all transformer layers, assuming uniform adaptation requirements throughout the model. However, later studies showed that different layers exhibit heterogeneous adaptation behaviour, causing uniform rank allocation to either waste parameters or limit adaptation capacity in critical layers~\cite{lin2026tlorataskawarelowrank}. This limitation motivated a growing line of research on adaptive and dynamic rank allocation strategies for more efficient and effective PEFT. AdaLoRA~\cite{zhang2023adalora} addressed this issue by reformulating LoRA updates through singular value decomposition and dynamically pruning singular values according to importance scores during training. DyLoRA~\cite{valipour2023dylora} further relaxed the dependency on manual rank selection by jointly training nested adapters across multiple ranks, enabling inference-time rank selection without retraining. Subsequent works such as ARD-LoRA~\cite{11151211}, DR-LoRA~\cite{deng2026drloradynamicranklora}, and RA-LoRA~\cite{kim-etal-2024-ra} extended this direction by introducing layer-aware or training-aware rank allocation strategies, highlighting that the optimal adaptation capacity varies substantially across layers and tasks. In parallel, SoRA~\cite{ding2023sparse} incorporated sparsity-inducing gates into LoRA singular values to suppress redundant components, while GLoRA~\cite{chavan2023oneforallgeneralizedloraparameterefficient} generalized the low-rank parameterization itself to support broader adaptation structures. AutoPEFT~\cite{zhou-etal-2024-autopeft} proposed an automatic configuration search framework for PEFT that jointly optimizes module type, insertion layers, and parameter budget using multi-objective Bayesian optimization. The method discovers Pareto-optimal and transferable PEFT configurations that outperform manually designed PEFT strategies while achieving performance comparable to full fine-tuning. However, AutoPEFT introduces additional search-time computational overhead and depends on a large configuration search space, which may limit efficiency and scalability for more complex large-scale tasks. L1RA~\cite{singh-etal-2025-l1ra} proposes a dynamic rank assignment strategy for LoRA fine-tuning that uses L1 regularization to prune redundant ranks and redistribute them across adapters under a fixed rank budget. The method improves parameter utilization and achieves comparable or better performance than standard LoRA while maintaining low computational overhead. However, the approach introduces additional optimization complexity for dynamic rank allocation and its effectiveness is primarily validated on limited benchmark settings, leaving large-scale generalization insufficiently explored.

Another line of research explored automated and data-driven mechanisms for determining adapter capacity. AutoLoRA~\cite{zhang2024autoloraautomaticallytuningmatrix} formulated rank selection as a meta-learning problem capable of generalizing across downstream tasks, although at the cost of expensive bi-level optimization. Dynamic Rank Assignment~\cite{singh-etal-2025-l1ra} instead employed online gradient-based heuristics to update ranks during training without relying on SVD decomposition. FlexoRA~\cite{wei-etal-2025-flexora} introduced a continuous relaxation of rank allocation through reparameterization, while Bayesian LoRA Search~\cite{lin2026bayesianloraprobabilisticlowrankadaptation} utilized Bayesian optimization to reduce the cost of exhaustive hyperparameter search. Collectively, these methods demonstrate that adaptive rank allocation is critical for improving the parameter-efficiency and robustness of LoRA-style fine-tuning, but many approaches still rely on iterative search procedures, auxiliary optimization objectives, or costly training-time adaptation.

Recent works have attempted to guide adaptation using activation statistics or information-aware criteria. AROMA~\cite{sheng-etal-2025-aroma} incrementally constructed adapters from rank-one components according to a relevance criterion, enabling autonomous low-rank matrix adaptation during training. Related studies have also investigated weight decomposition and compositional adaptation, such as DoRA~\cite{mao2024doraenhancingparameterefficientfinetuning}, which separates magnitude and directional updates for improved adaptation stability, and LoraHub, which studies the composability of independently trained LoRA modules across tasks.  These works collectively suggest that adaptation quality depends not only on parameter count, but also on how effectively the adapter subspace aligns with the underlying geometry of the pretrained model.

The Fisher Information Matrix (FIM) has long served as a principled tool for characterizing parameter importance and the geometry of neural network optimization. Natural gradient methods utilize the FIM as a Riemannian metric to perform optimization on the statistical manifold~\cite{martens2015optimizing}, while Elastic Weight Consolidation (EWC)~\cite{kirkpatrick2017ewc} employs the diagonal Fisher to preserve important parameters during continual learning. Fisher-based pruning and compression methods have further shown that Fisher information provides reliable estimates of parameter sensitivity and contribution to model predictions. Despite its strong theoretical grounding, Fisher information has been only minimally explored in the context of PEFT and adaptive low-rank tuning. Existing LoRA variants largely rely on gradient magnitudes, activation statistics, sparsity regularization, or heuristic search strategies for rank determination. In contrast, our work directly utilizes Fisher information to estimate the adaptation sensitivity of each layer and allocate LoRA ranks accordingly, providing a statistically grounded and search-free framework for parameter-efficient fine-tuning. FLoE~\cite{wang2025floefisherbasedlayerselection} leverages Fisher information to identify task-critical layers for sparse MoE-based adapter deployment and employs Bayesian optimization to automatically determine optimal layer-wise LoRA ranks without exhaustive hyperparameter search. Experimental results demonstrate that FLoE achieves strong efficiency–accuracy trade-offs, particularly in resource-constrained settings requiring rapid model adaptation. FiLoRA~\cite{han2026filora} introduces a Fisher information-guided PEFT framework that uses K-FAC-based curvature estimation to identify sensitive low-rank adaptation directions and optimize only a compact core matrix, thereby reducing trainable parameters while preserving performance. However, the method incurs additional computational overhead for Fisher matrix estimation and relies on static curvature information computed from limited downstream samples, which may restrict scalability and adaptability across diverse tasks.


\subsection{Implementation Details}
\label{sec:impl_details}

Following standard GLUE evaluation protocol, we report accuracy for
MNLI (matched/mismatched), SST-2, QNLI, and RTE; Matthews Correlation
Coefficient (MCC) for CoLA; F1-score for QQP and MRPC; and Pearson
correlation for STS-B. All results are averaged over three runs with
different random seeds. For all methods, the learning rate is selected
from $\{$4e-4, 5e-4, 8e-4, 1e-3, 1.2e-3, 2.2e-3$\}$ and the best
performing value per task is reported. LoRA and DyLoRA are configured
with uniform rank $r{=}8$, scaling $\alpha{=}16$, batch size 64, and
maximum sequence length 128, with the number of training epochs tuned
per task; DyLoRA additionally performs rank search over $[2, 8]$
during training without retraining. AdaLoRA uses initial rank
$r_i{=}12$, target rank $r_f{=}8$, scaling $\alpha{=}16$, batch size
32, sequence length 128, with per-task training steps, warmup steps,
rank update intervals, and regularization coefficient $\gamma$
following~\citet{zhang2023adalora}. LAARA uses initial rank
$r_{\text{init}}{=}4$, minimum rank $r_{\min}{=}2$, maximum rank
$r_{\max}{=}8$, scaling $\alpha{=}16$, EMA decay $\beta{=}0.97$,
warmup steps $T_w{=}200$, rank update interval $\Delta{=}200$, blend
coefficient $\alpha_b{=}0.5$, log compression $\gamma{=}10$, and
dampening patience $\tau{=}2$, with adapters applied to $W_q$ and
$W_v$ for the primary evaluation and additionally to $W_k$ and $W_o$
for the extended evaluation in Table~\ref{tab:target_modules_results}.
For Mathinstruct dataset, we have used max length=512, r=16,$\alpha=32$, learning rate = 2e-4, with batch size of 4 and ran till 3 epochs. We have evaluated this on GSM8K dataset~\citep{cobbe2021gsm8k}.

\paragraph{Baselines.}
We compare LAARA against five baselines spanning a range of fine-tuning strategies:
(i) Full fine-tuning (FT), which updates all model 
parameters and serves as an upper-bound reference;
(ii) BitFit~\citep{ben-zaken-etal-2022-bitfit}, a lightweight 
method that updates only bias terms;
(iii) LoRA~\citep{hu2022lora}, the standard fixed 
uniform-rank baseline;
(iv) AdaLoRA~\citep{zhang2023adalora}, which 
dynamically reallocates the parameter budget by pruning singular values during training; and
(v) DyLoRA~\citep{valipour2023dylora}, which trains 
adapters simultaneously across a range of ranks.

\subsection{Generalization to Mathematical Reasoning}
\label{sec:mathinstruct}

Table~\ref{tab:mathinstruct} extends evaluation to MathInstruct dataset using
Llama-3.2-3B, where LAARA achieves 27.20\% accuracy, outperforming
LoRA by +2.70\% and AdaLoRA by +3.40\% while using 17.0\% fewer
parameters than LoRA and 26.2\% fewer than AdaLoRA. DyLoRA and AdaLoRA unperformed despite larger parameter budgets, consistent with
our GLUE findings, confirming that Fisher-guided rank allocation
generalizes beyond NLU classification to generative reasoning tasks.

\subsection{Discussion}
\label{sec:discussion}

\paragraph{Why Fisher-guided allocation helps.}
The consistent gains of LAARA over fixed-rank LoRA are explained
by the three empirical observations.
Fisher traces vary by nearly two orders of magnitude across layers
(Observation~2), meaning that a uniform rank simultaneously
over-allocates capacity to early layers and under-allocates it to
late layers.
By assigning ranks proportional to per-layer Fisher importance,
LAARA concentrates its parameter budget where the loss landscape
has the most curvature, precisely the layers that contribute most
to downstream task adaptation.

\paragraph{Stability of rank allocation.}
A practical concern with dynamic rank methods is training
instability caused by frequent rank changes.
LAARA addresses this through two mechanisms: the EMA-smoothed
Fisher estimates (Eq.~\ref{eq:ema}) reduce gradient noise, and
the vote-to-change dampening (Eq.~\ref{eq:dampening}) prevents rank
oscillation by requiring $\tau = 2$ consecutive consistent
proposals before committing a rank update.
Together these ensure that rank transitions are both principled
and stable, without introducing additional hyperparameters beyond
those already present in standard LoRA.

\begin{algorithm}[t]
\caption{\textit{LAARA}}
\label{alg:laara}
\begin{algorithmic}[1]

\Require{Pretrained model $\Theta$, LoRA adapters $\{A_{l,p}, B_{l,p}\}$, minimum rank $r_{\min}$, maximum rank $r_{\max}$, EMA decay $\beta$, compression factor $\gamma$, blend coefficient $\alpha_b$, vote threshold $\tau$, rank update interval $\Delta T$}
\Ensure{Updated LoRA parameters}

\State Initialize: $r_l \leftarrow r_{\max}$, $\hat{F}^{(0)}_{l,p} \leftarrow 0$, $v_l \leftarrow 0$ \; $\forall\, l, p$

\For{$t = 1, 2, \cdots, T$} \Comment{Training Loop}
    \State Compute task loss $\mathcal{L}^{(t)}$ and backpropagate
    \For{all layers $l$ and projection types $p$}
        \State $F^{(t)}_{l,p} \leftarrow \sum_i \!\left(\frac{\partial \mathcal{L}^{(t)}}{\partial \theta_{l,p,i}}\right)^{\!2}$ \Comment{Fisher}
        \State $\hat{F}^{(t)}_{l,p} \leftarrow \beta\, \hat{F}^{(t-1)}_{l,p} + (1-\beta)\, F^{(t)}_{l,p}$ \Comment{EMA}
        \State $\tilde{F}^{(t)}_{l,p} \leftarrow \hat{F}^{(t)}_{l,p} \big/ (1 - \beta^t)$ \Comment{Bias Correction}
    \EndFor
    \If{$\mathrm{MOD}(t,\,\Delta T) = 0$}
        \For{all projection types $p$ and layers $l$}
            \State $s_{l,p} \leftarrow \dfrac{\tilde{F}_{l,p} - \min_{l'}\tilde{F}_{l',p}}{\max_{l'}\tilde{F}_{l',p} - \min_{l'}\tilde{F}_{l',p} + \epsilon}$ \Comment{Normalize}
            \State $\hat{s}_{l,p} \leftarrow \dfrac{\log(1 + \gamma\, s_{l,p})}{\log(1 + \gamma)}$ \Comment{Log-Compress}
        \EndFor
        \For{all layers $l$}
            \State $\hat{s}_l \leftarrow \alpha_b\,\hat{s}_{l,A} + (1-\alpha_b)\,\hat{s}_{l,B}$ \Comment{Blend}
            \State $r^{\mathrm{prop}}_l \leftarrow \mathrm{clip}\!\left(\mathrm{round}(r_{\min} + (r_{\max}-r_{\min})\hat{s}_l),\;r_{\min},\;r_{\max}\right)$ \Comment{Rank}
            \If{$r^{\mathrm{prop}}_l = r_l$}
                \State $v_l \leftarrow 0$
            \Else
                \State $v_l \leftarrow v_l + 1$
            \EndIf
            \If{$v_l \geq \tau$}
                \State $r^\star_l \leftarrow r^{\mathrm{prop}}_l$; \; resize adapters to $r^\star_l$; \; $v_l \leftarrow 0$ 
            \EndIf
        \EndFor
    \EndIf
    \State Update trainable parameters $\{A_{l,p}, B_{l,p}\}$
\EndFor
\end{algorithmic}
\end{algorithm}

\subsection{Proof}
\label{sec:rq1c}
\textbf{Proof of Lemma ~\ref{lem:fim-rank123}:}

\begin{proof}

\medskip
\noindent\textbf{Step 1: Quadratic approximation of the loss.}

Fix layer $l$ and let $\bm{\phi} = \mathrm{vec}(\Delta\bm{W}_{l})
\in \mathbb{R}^{dk}$ be the vectorised weight update for that
layer (all other layers held fixed at their pre-trained values).
By a second-order Taylor expansion of $\mathcal{L}$ around
$\bm{\phi} = \bm{0}$,
{\small
\begin{equation}
\label{eq:taylor}
  \mathcal{L}(\bm{\phi})
  \;=\;
  \mathcal{L}(\bm{0})
  \;+\;
  \nabla_{\bm{\phi}}\mathcal{L}(\bm{0})^{\!\top}\bm{\phi}
  \;+\;
  \tfrac{1}{2}\,\bm{\phi}^{\top}\bm{H}_{l}\bm{\phi}
  \;+\;
  O(\|\bm{\phi}\|^{3}),
\end{equation}
}
where $\bm{H}_{l} = \nabla^{2}_{\bm{\phi}}\mathcal{L}(\bm{0})
\in \mathbb{R}^{dk \times dk}$ is the Hessian of the loss
with respect to the parameters of layer $l$.
For cross-entropy loss at a local minimum of the pre-trained
model, the gradient term $\nabla_{\bm{\phi}}\mathcal{L}(\bm{0})$
is zero (the pre-trained model is approximately stationary),
so
\begin{equation}
\label{eq:taylor2}
  \mathcal{L}(\bm{\phi})
  \;\approx\;
  \mathcal{L}(\bm{0})
  \;+\;
  \tfrac{1}{2}\,\bm{\phi}^{\top}\bm{H}_{l}\bm{\phi}.
\end{equation}

\medskip
\noindent\textbf{Step 2: Connecting the Hessian to the FIM.}

Under standard regularity conditions (Bartlett identity,
\citealt{lehmann1998theory}), the Fisher Information Matrix
$\bm{F}_{l}$ and the Hessian of the negative log-likelihood
are related by
\begin{equation}
\label{eq:fisher-hessian}
  \bm{H}_{l}
  \;=\;
  \bm{F}_{l}
  \;+\;
  \bm{R}_{l},
\end{equation}
where $\bm{R}_{l}$ is a residual that vanishes at the
true model parameters \citep{6790500}.
Since we are expanding around the pre-trained optimum,
which is an approximate solution of the pre-training
objective, we have $\|\bm{R}_{l}\| = O(\text{model
error})$ and, for our purposes, we treat
$\bm{H}_{l} \approx \bm{F}_{l}$.
More precisely, for any unit vector $\bm{v}$:
\begin{equation}
\label{eq:hessian-fim}
  \bm{v}^{\top}\bm{H}_{l}\bm{v}
  \;\ge\;
  \lambda_{\min}(\bm{F}_{l})\,\|\bm{v}\|^{2}
  \;>\; 0,
\end{equation}
where the last inequality follows from
Assumption~\ref{ass:pd12}.

\medskip
\noindent\textbf{Step 3: Rank-$r$ constraint and
approximation error.}

A rank-$r$ LoRA update $\Delta\bm{W}_{l} = \bm{B}_{l}\bm{A}_{l}$
constrains $\bm{\phi} = \mathrm{vec}(\Delta\bm{W}_{l})$ to
lie in a subspace $\mathcal{S}_{r} \subseteq \mathbb{R}^{dk}$
of dimension at most $r(d+k)$.
Let $\bm{\phi}^{*} = \arg\min_{\bm{\phi}}\mathcal{L}(\bm{\phi})$
be the unconstrained optimal update and let
$\bm{\phi}_{r}^{*} = \arg\min_{\bm{\phi}\in\mathcal{S}_{r}}
\mathcal{L}(\bm{\phi})$ be the rank-$r$ constrained optimum.
Define the excess loss as
\begin{equation}
\label{eq:excess}
  \delta_{r}
  \;=\;
  \mathcal{L}(\bm{\phi}_{r}^{*})
  \;-\;
  \mathcal{L}(\bm{\phi}^{*}).
\end{equation}
Using the quadratic approximation Eq.~\ref{eq:taylor2} and the
FIM approximation Eq.~\ref{eq:hessian-fim}, we have
\begin{align}
  \delta_{r}
  &\;=\;
  \tfrac{1}{2}\,(\bm{\phi}_{r}^{*} - \bm{\phi}^{*})^{\top}
  \bm{H}_{l}
  (\bm{\phi}_{r}^{*} - \bm{\phi}^{*})
  \nonumber\\
  &\;\ge\;
  \tfrac{1}{2}\,\lambda_{\min}(\bm{F}_{l})\,
  \|\bm{\phi}_{r}^{*} - \bm{\phi}^{*}\|^{2}.
  \label{eq:excess-lower}
\end{align}
Now, $\|\bm{\phi}_{r}^{*} - \bm{\phi}^{*}\|^{2}$ is the
squared distance from $\bm{\phi}^{*}$ to the subspace
$\mathcal{S}_{r}$.
The optimal unconstrained update $\bm{\phi}^{*}$ has an
intrinsic dimensionality governed by $\bm{F}_{l}$: the number
of directions in which the loss is sensitive (i.e.\ has
curvature above a threshold) equals the number of eigenvalues
of $\bm{F}_{l}$ above that threshold.
Formally, let $\sigma > 0$ be a sensitivity threshold.
Define the \emph{$\sigma$-effective rank} as
\[
  \rho_{l}(\sigma)
  \;=\;
  \bigl|\{i \;:\; \lambda_{i}(\bm{F}_{l}) \ge \sigma\}\bigr|.
\]
If $r < \rho_{l}(\sigma)$, then $\mathcal{S}_{r}$ cannot
align with all $\rho_{l}(\sigma)$ sensitive directions
simultaneously.
Specifically, there exists at least one direction
$\bm{v} \perp \mathcal{S}_{r}$ with
$\lambda(\bm{F}_{l}, \bm{v}) \ge \sigma$, so that
\[
  \|\bm{\phi}_{r}^{*} - \bm{\phi}^{*}\|^{2}
  \;\ge\;
  \left(
    \bm{v}^{\top}(\bm{\phi}^{*} - \bm{0})
  \right)^{\!2}
  \;=\;
  \left(\bm{v}^{\top}\bm{\phi}^{*}\right)^{2}
  \;>\; 0.
\]
Substituting back into Eq.~\ref{eq:excess-lower},
\begin{equation}
\label{eq:excess-bound}
  \delta_{r}
  \;\ge\;
  \tfrac{\sigma}{2}\,
  \left(\bm{v}^{\top}\bm{\phi}^{*}\right)^{2}
  \;>\; 0
  \quad
  \text{whenever } r < \rho_{l}(\sigma).
\end{equation}
Therefore, to achieve $\delta_{r} \le \varepsilon$, we need
\[
  r \;\ge\; \rho_{l}(\sigma(\varepsilon)),
\]
for an appropriate $\sigma(\varepsilon)$ determined by
inverting the bound.
Since $\rho_{l}(\sigma) \ge c' \cdot \mathrm{erank}(\bm{F}_{l})$
for a universal constant $c' > 0$ (which follows from the
definition of effective rank and the relationship between
the exponential entropy and the count of significant
eigenvalues, \citealt{7098875}), we conclude that
\[
  r_{l}^{*}
  \;\ge\;
  c(\varepsilon)\cdot\mathrm{erank}(\bm{F}_{l}),
\]
where $c(\varepsilon) = c'\,\sigma(\varepsilon)^{-1}\,
\varepsilon^{-1}\,\lambda_{\min}(\bm{F}_{l})^{-1}$
absorbs all constants.
This completes the proof.
\end{proof}

\medskip

\textbf{Proof of Lemma~\ref{lem:hetero-rank}:}

\begin{proof}
By Assumption~\ref{ass:hetero123}, there exist layers $l, l'$
such that $\mathrm{erank}(\bm{F}_{l}) \ne
\mathrm{erank}(\bm{F}_{l'})$.
Without loss of generality, assume
$\mathrm{erank}(\bm{F}_{l}) > \mathrm{erank}(\bm{F}_{l'})$.

From Lemma~\ref{lem:fim-rank123}, we have the lower bounds:
\[
  r_{l}^{*} \;\ge\; c(\varepsilon)\cdot\mathrm{erank}(\bm{F}_{l}),
\]
\[
  r_{l'}^{*} \;\ge\; c(\varepsilon)\cdot\mathrm{erank}(\bm{F}_{l'}).
\]
Since $\mathrm{erank}(\bm{F}_{l}) > \mathrm{erank}(\bm{F}_{l'})$,
the lower bound on $r_{l}^{*}$ is strictly greater than the
lower bound on $r_{l'}^{*}$.
Now consider setting a uniform rank
$r = r_{l'}^{*}$ for both layers.
Then for layer $l$:
{\small
\[
  r \;=\; r_{l'}^{*}
  \;\le\;
  c(\varepsilon)\cdot\mathrm{erank}(\bm{F}_{l'})
  \;<\;
  c(\varepsilon)\cdot\mathrm{erank}(\bm{F}_{l})
  \;\le\; r_{l}^{*}.
\]
}
Hence $r < r_{l}^{*}$, which by Definition~\ref{def:optimal-rank}
means the rank is insufficient for layer $l$ to achieve
tolerance $\varepsilon$.
Conversely, setting $r = r_{l}^{*}$ overparameterises layer
$l'$, wasting parameter budget.
In either case, no single value of $r$ simultaneously
achieves the optimal tolerance for both layers.
Therefore $r_{l}^{*} \ne r_{l'}^{*}$.
\end{proof}

\textbf{Proof of theorem~\ref{thm:rq111}:}



\begin{proof}
We prove both parts in turn.

\medskip
\noindent\textbf{Part 1: Under-parameterisation of at
least one layer.}

By Lemma~\ref{lem:hetero-rank}, the optimal ranks
$r_{l}^{*}$ are not all equal.
Let $r_{\max}^{*} = \max_{l} r_{l}^{*}$ and
$r_{\min}^{*} = \min_{l} r_{l}^{*}$, with
$r_{\max}^{*} > r_{\min}^{*}$ by the heterogeneity
assumption.

Consider any fixed $r \in \mathbb{Z}_{>0}$.
There are two exhaustive cases:

\smallskip
\noindent\emph{Case (a): $r < r_{\max}^{*}$.}
Let $l^{+}$ be any layer achieving $r_{l^{+}}^{*}
= r_{\max}^{*} > r$.
Then by Definition~\ref{def:optimal-rank}, since
$r < r_{l^{+}}^{*}$, the rank-$r$ adapter at layer
$l^{+}$ cannot achieve tolerance $\varepsilon$:
\[
  \min_{\bm{B}_{l^{+}},\bm{A}_{l^{+}}}
  \mathbb{E}\!\left[
    \mathcal{L}(\bm{W}_{l^{+}} + \bm{B}_{l^{+}}\bm{A}_{l^{+}})
  \right]
  -
  \mathcal{L}^{*}
  > \varepsilon.
\]

\smallskip
\noindent\emph{Case (b): $r \ge r_{\max}^{*}$.}
Let $l^{-}$ be any layer achieving $r_{l^{-}}^{*}
= r_{\min}^{*} < r_{\max}^{*} \le r$.
Then $r > r_{l^{-}}^{*}$, so the uniform rank strictly
over-parameterises layer $l^{-}$, wasting
\[
  (r - r_{l^{-}}^{*})(d + k)
\]
parameters that could be redistributed to higher-need
layers.

Since one of these two cases always holds, uniform
rank allocation is always suboptimal: it either
under-parameterises at least one layer or wastes
parameters in at least one other.

\medskip
\noindent\textbf{Part 2: Total excess loss bound.}

For each layer $l$, from the proof of
Lemma~\ref{lem:fim-rank123}, specifically
equation Eq.~\ref{eq:excess-bound},
\begin{align}
  \delta_{r}^{(l)}
  &\;=\;
  \mathcal{L}(\bm{\phi}_{r}^{*,l})
  -
  \mathcal{L}(\bm{\phi}^{*,l})
  \nonumber\\
  &\;\ge\;
  \tfrac{1}{2}\,\lambda_{\min}(\bm{F}_{l})\,
  \|\bm{\phi}_{r}^{*,l} - \bm{\phi}^{*,l}\|^{2}.
  \label{eq:per-layer-excess}
\end{align}
The projection distance satisfies (by the definition
of the rank-$r$ constraint and orthogonal projection):
{\small
\[
  \|\bm{\phi}_{r}^{*,l} - \bm{\phi}^{*,l}\|^{2}
  \;\ge\;
  \max(0, \mathrm{erank}(\bm{F}_{l}) - r)^{2}
  \cdot
  \frac{\|\bm{\phi}^{*,l}\|^{2}}{\mathrm{erank}(\bm{F}_{l})^{2}}.
\]
}
This follows because the component of $\bm{\phi}^{*,l}$ in
directions orthogonal to $\mathcal{S}_{r}$ is lower-bounded
by the fraction of its energy in the
$\max(0, \mathrm{erank}(\bm{F}_{l}) - r)$ un-captured
dimensions.
Substituting into Eq.~\ref{eq:per-layer-excess} and
summing over all layers:
{\small
\[
\begin{aligned}
\sum_{l=1}^{L} \delta_{r}^{(l)}
&\;\ge\; \\
\frac{1}{2}
\sum_{l=1}^{L}
\lambda_{\min}(\bm{F}_{l})\,
\max(0, \mathrm{erank}(\bm{F}_{l}) - r)^{2}
\cdot
\frac{\|\bm{\phi}^{*,l}\|^{2}}
{\mathrm{erank}(\bm{F}_{l})^{2}}.
\end{aligned}
\]
}
Absorbing $\mathrm{erank}(\bm{F}_{l})^{-2}$ into the
constant $c(\varepsilon)$, we obtain the stated bound.
When $r < \max_{l}\,\mathrm{erank}(\bm{F}_{l})$,
at least one term in the sum is positive (since
$\mathrm{erank}(\bm{F}_{l}) > r$ for that layer and
$\|\bm{\phi}^{*,l}\| > 0$ by the fine-tuning objective),
so the total excess loss is strictly positive.
This completes the proof.
\end{proof}

\begin{corollary}[Optimal allocation is layer-specific]
\label{cor:layer-specific}
Under Assumptions~\ref{ass:pd12} and~\ref{ass:hetero123},
the globally optimal rank allocation
$\{r_{l}^{*}\}_{l=1}^{L}$ satisfying
$\sum_{l}r_{l} \le B$ for a total parameter
budget $B$ is not uniform, i.e.\
$r_{l}^{*} \ne r_{l'}^{*}$ for at least one pair
$(l, l')$.
\end{corollary}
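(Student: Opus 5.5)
The corollary is essentially Lemma~\ref{lem:hetero-rank} plus a budget argument, so I would derive it directly from that lemma and Theorem~\ref{thm:rq111}. I read $\{r_l^{*}\}$ here as the per-layer optimal ranks of Definition~\ref{def:optimal-rank}: for each $l$, the smallest rank reaching tolerance $\varepsilon$ at that layer. The budget $B$ is taken large enough that the allocation $(r_1^{*},\dots,r_L^{*})$ is feasible, i.e.\ $\sum_l r_l^{*}\le B$. If $B$ is smaller than this, the feasible set contains no allocation meeting tolerance everywhere, and the statement has to be reinterpreted.

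\textbf{Step 1 (non-uniformity of the per-layer minima).} Lemma~\ref{lem:hetero-rank}, under Assumptions~\ref{ass:pd12} and~\ref{ass:hetero123}, already gives a pair $(l,l')$ with $r_l^{*}\ne r_{l'}^{*}$. This is exactly the displayed claim once the allocation is identified with the per-layer minima, so the remaining work is to justify that identification inside the budgeted problem.

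\textbf{Step 2 (the budgeted optimum coincides with the per-layer minima).} Consider the problem of minimizing $\sum_l r_l$ subject to each layer meeting tolerance $\varepsilon$ and $\sum_l r_l\le B$. The layers are decoupled by Definition~\ref{def:optimal-rank}, since the other layers are held fixed. Feasibility at layer $l$ therefore requires $r_l\ge r_l^{*}$, and the componentwise choice $r_l=r_l^{*}$ is feasible. Hence it is the unique minimizer. Now suppose some uniform allocation $r_l\equiv r$ were optimal. By Theorem~\ref{thm:rq111}, either some layer $\ell^{+}$ has $r<r_{\ell^{+}}^{*}$, which makes the allocation infeasible, or some layer $\ell^{-}$ has $r>r_{\ell^{-}}^{*}$. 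In the second case, lowering $r_{\ell^{-}}$ to $r_{\ell^{-}}^{*}$ keeps every tolerance satisfied and strictly decreases $\sum_l r_l$, while staying within $B$. This contradicts optimality, so the optimal allocation is non-uniform.

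\textbf{Main obstacle.} The difficulty is pinning down what ``globally optimal under budget $B$'' means. If the objective is instead total excess loss $\sum_l\delta^{(l)}_{r_l}$ at a fixed budget, I would use the Part~2 bound of Theorem~\ref{thm:rq111}, in which the excess is controlled layer-wise by $\mathrm{erank}(\bm F_l)$. I would then try an exchange argument: move one unit of rank from a layer with smaller effective rank to one with larger effective rank. The aim is to show this strictly decreases the lower bound, and hence, heuristically, the loss. Making that rigorous needs the bound to be tight, which it is not, so I would commit to the feasibility-based formulation above and state it explicitly.
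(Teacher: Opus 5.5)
Your argument is the paper's argument made explicit: the paper also combines Lemma~\ref{lem:hetero-rank} with the under-/over-parameterisation dichotomy of Theorem~\ref{thm:rq111} and concludes that the budgeted optimum must be $r_l=r_l^{*}$, so everything reduces to Lemma~\ref{lem:hetero-rank}. Your explicit hypothesis $\sum_l r_l^{*}\le B$ genuinely sharpens the paper's ``or as close to it as the budget allows'', since for a budget as small as $B=L$ the only admissible positive-integer allocation is uniform; but both versions stand or fall with Lemma~\ref{lem:hetero-rank}, whose proof in the paper relies on $r_{l'}^{*}\le c(\varepsilon)\,\mathrm{erank}(\bm{F}_{l'})$, the reverse of the inequality Lemma~\ref{lem:fim-rank123} actually gives.
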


\begin{proof}
This follows immediately from Theorem~\ref{thm:rq111}
and Lemma~\ref{lem:hetero-rank}.
Any allocation setting $r_{l} = r$ for all $l$ either
incurs excess loss greater than $\varepsilon$ at some
layer or wastes budget at another.
The optimal budget-constrained allocation must therefore
set $r_{l} = r_{l}^{*}$ or as close to it as the budget
allows, yielding a non-uniform profile.
\end{proof}

\textbf{Proof of Lemma~\ref{lem:backprop}:}

\begin{proof}
By the chain rule of differentiation, for any
composition of differentiable functions
$\bm{h}_{l} = f_{l}(\bm{h}_{l-1})$, the gradient
with respect to the input $\bm{h}_{l-1}$ is:
\begin{equation}
\label{eq:chain-rule}
  \nabla_{\bm{h}_{l-1}}\mathcal{L}
  \;=\;
  \left(
    \frac{\partial \bm{h}_{l}}{\partial \bm{h}_{l-1}}
  \right)^{\!\!\top}
  \nabla_{\bm{h}_{l}}\mathcal{L},
\end{equation}
which gives $\bm{g}_{l-1} = \bm{J}_{l}^{\top}\bm{g}_{l}$. For the spectral norm bound on $\bm{J}_{l}$:
the transformation at layer $l$ takes the form
$\bm{h}_{l} = \phi(\bm{W}_{l}\bm{h}_{l-1} + \bm{b}_{l})$
applied element-wise through a residual connection (for
the feed-forward sub-layer; a similar argument applies
to attention sub-layers after linearisation).
By the chain rule element-wise:
\[
  \bm{J}_{l}
  \;=\;
  \mathrm{diag}(\phi'(\bm{W}_{l}\bm{h}_{l-1} + \bm{b}_{l}))
  \cdot
  \bm{W}_{l}.
\]
Taking the spectral norm:
\begin{align*}
  \|\bm{J}_{l}\|_{2}
  &\;=\;
  \|\mathrm{diag}(\phi'(\cdot))\cdot\bm{W}_{l}\|_{2}
  \\
  &\;\le\;
  \|\mathrm{diag}(\phi'(\cdot))\|_{2}\cdot\|\bm{W}_{l}\|_{2}
  \\
  &\;\le\;
  \beta \cdot \gamma,
\end{align*}
where the first inequality uses sub-multiplicativity of
the spectral norm, and the second uses
Assumptions~\ref{ass:activation12}
and~\ref{ass:bounded-weights12} (since the spectral norm
of a diagonal matrix equals the maximum absolute diagonal
entry, and each $|\phi'(z)| \le \beta$).
\end{proof}


\begin{proof}
From Lemma~\ref{lem:backprop},
\[
  \bm{g}_{l-1} \;=\; \bm{J}_{l}^{\top}\bm{g}_{l}.
\]
Taking the Euclidean (spectral) norm of both sides:
\begin{align}
  \|\bm{g}_{l-1}\|_{2}
  &\;=\;
  \|\bm{J}_{l}^{\top}\bm{g}_{l}\|_{2}
  \nonumber\\
  &\;\le\;
  \|\bm{J}_{l}^{\top}\|_{2}\cdot\|\bm{g}_{l}\|_{2}
  \label{eq:submult}\\
  &\;=\;
  \|\bm{J}_{l}\|_{2}\cdot\|\bm{g}_{l}\|_{2}
  \label{eq:transpose-norm}\\
  &\;\le\;
  \beta\gamma\cdot\|\bm{g}_{l}\|_{2}.
  \label{eq:betanorm}
\end{align}
Equation~\ref{eq:submult} uses the sub-multiplicativity
of the operator norm: $\|\bm{A}\bm{v}\|_{2}
\le \|\bm{A}\|_{2}\|\bm{v}\|_{2}$ for any matrix
$\bm{A}$ and vector $\bm{v}$.
Equation~\eqref{eq:transpose-norm} uses the fact that
$\|\bm{A}^{\top}\|_{2} = \|\bm{A}\|_{2}$ (the spectral
norm is invariant under transposition, since the singular
values of $\bm{A}^{\top}$ are identical to those of
$\bm{A}$).
Equation~\ref{eq:betanorm} applies the bound from
Lemma~\ref{lem:backprop}.

Since $\beta\gamma \le 1$ by assumption,
\[
  \|\bm{g}_{l-1}\|_{2}
  \;\le\;
  \beta\gamma\cdot\|\bm{g}_{l}\|_{2}
  \;\le\;
  \|\bm{g}_{l}\|_{2}.
\]
Applying this inequality inductively for
$l = L, L-1, \ldots, 2$ gives
\[
  \|\bm{g}_{1}\|_{2}
  \;\le\;
  \|\bm{g}_{2}\|_{2}
  \;\le\;
  \cdots
  \;\le\;
  \|\bm{g}_{L}\|_{2},
\]
establishing the claimed monotonicity.
\end{proof}

\textbf{Proof of Lemma~\ref{lem:weight-grad-monotone}:}

\begin{proof}
The gradient of the loss with respect to the weight
matrix $\bm{W}_{l}$ is given by the outer product of
the upstream gradient $\bm{g}_{l}$ and the input
activation $\bm{h}_{l-1}$:
\begin{equation}
\label{eq:weight-grad}
  \nabla_{\bm{W}_{l}}\mathcal{L}
  \;=\;
  \bm{g}_{l}\,\bm{h}_{l-1}^{\top}.
\end{equation}
This follows from the standard backpropagation formula
for a linear layer $\bm{h}_{l} = \bm{W}_{l}\bm{h}_{l-1}$:
differentiating $\mathcal{L}$ through
$\bm{h}_{l} = \bm{W}_{l}\bm{h}_{l-1}$ gives
$\frac{\partial \mathcal{L}}{\partial W_{l,ij}}
= g_{l,i}\, h_{l-1,j}$, which in matrix form
is Eq.~\ref{eq:weight-grad}. Taking the squared Frobenius norm:
\begin{align}
  \|\nabla_{\bm{W}_{l}}\mathcal{L}\|_{F}^{2}
  &\;=\;
  \|\bm{g}_{l}\,\bm{h}_{l-1}^{\top}\|_{F}^{2}
  \nonumber\\
  &\;=\;
  \|\bm{g}_{l}\|_{2}^{2}\,\|\bm{h}_{l-1}\|_{2}^{2},
  \label{eq:outer-frob}
\end{align}
where Eq.~\ref{eq:outer-frob} uses the rank-one
identity $\|\bm{u}\bm{v}^{\top}\|_{F}^{2}
= \|\bm{u}\|_{2}^{2}\|\bm{v}\|_{2}^{2}$.

Repeating the same computation for layer $l - 1$:
\[
  \|\nabla_{\bm{W}_{l-1}}\mathcal{L}\|_{F}^{2}
  \;=\;
  \|\bm{g}_{l-1}\|_{2}^{2}\,\|\bm{h}_{l-2}\|_{2}^{2}.
\]
Using Lemma~\ref{lem:grad-monotone}:
\[
  \|\bm{g}_{l-1}\|_{2}^{2}
  \;\le\;
  (\beta\gamma)^{2}\,\|\bm{g}_{l}\|_{2}^{2}.
\]
Hence:
\begin{align*}
  \|\nabla_{\bm{W}_{l-1}}\mathcal{L}\|_{F}^{2}
  &\;=\;
  \|\bm{g}_{l-1}\|_{2}^{2}\,\|\bm{h}_{l-2}\|_{2}^{2}
  \\
  &\;\le\;
  (\beta\gamma)^{2}\,
  \|\bm{g}_{l}\|_{2}^{2}\,
  \|\bm{h}_{l-2}\|_{2}^{2}.
\end{align*}
If the activations are normalised (as enforced by
Pre-LN), $\|\bm{h}_{l-2}\|_{2} \approx
\|\bm{h}_{l-1}\|_{2}$, so:
\[
  \|\nabla_{\bm{W}_{l-1}}\mathcal{L}\|_{F}^{2}
  \;\le\;
  (\beta\gamma)^{2}\,
  \|\nabla_{\bm{W}_{l}}\mathcal{L}\|_{F}^{2}.
\]
Taking expectations over the data distribution
preserves the inequality, completing the proof.
\end{proof}

\textbf{Proof of Theorem~\ref{thm:rq2}:}



\begin{proof}
We prove the three parts in sequence.

\medskip
\noindent\textbf{Part 1: Gradient norm decay chain (Eq.~\ref{eq:gradient-chain}).}

Apply Lemma~\ref{lem:weight-grad-monotone} iteratively.
For any layer $l \le L$, applying the lemma $L - l$ times:
\begin{align*}
  \mathbb{E}\!\left[
    \|\nabla_{\bm{W}_{l}}\mathcal{L}\|_{F}^{2}
  \right]
  &\;\le\;
  (\beta\gamma)^{2}\,
  \mathbb{E}\!\left[
    \|\nabla_{\bm{W}_{l+1}}\mathcal{L}\|_{F}^{2}
  \right]
  \\
  &\;\le\;
  (\beta\gamma)^{4}\,
  \mathbb{E}\!\left[
    \|\nabla_{\bm{W}_{l+2}}\mathcal{L}\|_{F}^{2}
  \right]
  \\
  &\;\le\;\cdots\\
  &\;\le\;
  (\beta\gamma)^{2(L-l)}\,
  \mathbb{E}\!\left[
    \|\nabla_{\bm{W}_{L}}\mathcal{L}\|_{F}^{2}
  \right].
\end{align*}
This establishes Eq.~\ref{eq:gradient-chain}.

\medskip
\noindent\textbf{Part 2: Fisher trace bound (Eq.~\ref{eq:fisher-trace-chain}).}

The expected squared gradient norm is a standard Monte
Carlo estimator of the trace of the Fisher Information
Matrix.
Formally, the empirical Fisher is defined as:
{\small
\[
  \hat{\bm{F}}_{l}
  \;=\;
  \frac{1}{N}\sum_{n=1}^{N}
  \nabla_{\bm{\theta}_{l}}\log p(y_{n}\mid\bm{x}_{n};\bm{\theta})\;
  \nabla_{\bm{\theta}_{l}}\log p(y_{n}\mid\bm{x}_{n};\bm{\theta})^{\!\top},
\]
}
and its trace satisfies (by definition of expectation):
\[
  \mathrm{tr}(\bm{F}_{l})
  \;=\;
  \mathbb{E}\!\left[
    \|\nabla_{\bm{\theta}_{l}}\log p(y\mid\bm{x};\bm{\theta})\|_{2}^{2}
  \right].
\]
For the weight-space gradient, this becomes:
\[
  \mathrm{tr}(\bm{F}_{l})
  \;=\;
  \mathbb{E}\!\left[
    \|\nabla_{\bm{W}_{l}}\mathcal{L}\|_{F}^{2}
  \right]
\]
(using the Frobenius norm as the vectorised $\ell_2$ norm,
and under the empirical Fisher approximation which is
standard in the LoRA and PEFT literature,
\citealt{zhang2023adalora}).
Substituting this into~\eqref{eq:gradient-chain}
directly yields Eq.~\ref{eq:fisher-trace-chain}.

\medskip
\noindent\textbf{Part 3: Rank monotonicity (Eq.~\ref{eq:rank-pattern}).}

From Lemma~\ref{lem:fim-rank123}, $r_{l}^{*} \ge
c(\varepsilon)\cdot\mathrm{erank}(\bm{F}_{l})$.
The effective rank is related to the trace via the
inequality (for a positive semi-definite matrix
$\bm{M}$):
\begin{equation}
\label{eq:erank-trace}
  \mathrm{erank}(\bm{M})
  \;\le\;
  \frac{\mathrm{tr}(\bm{M})}{\|\bm{M}\|_{2}},
\end{equation}
and this ratio is a standard measure of spectral spread
that is monotone in $\mathrm{tr}(\bm{M})$ when the
spectral norm is approximately constant across layers
(which holds under Assumption~\ref{ass:bounded-weights12}
since $\|\bm{F}_{l}\|_{2} \le \mathrm{tr}(\bm{F}_{l})
\le \|\bm{F}_{l}\|_{2}\cdot\mathrm{rank}(\bm{F}_{l})$).

Since $\mathrm{tr}(\bm{F}_{l}) \le (\beta\gamma)^{2}
\mathrm{tr}(\bm{F}_{l+1})$ from Part 2, and the spectral
norm is bounded above by the same constant $\gamma$ for
all layers (Assumption~\ref{ass:bounded-weights12}),
we conclude:
\[
  \mathrm{erank}(\bm{F}_{l})
  \;\le\;
  \mathrm{erank}(\bm{F}_{l+1})
  \quad \forall\, l.
\]
Applying Lemma~\ref{lem:fim-rank123} to both $l$ and
$l+1$:
\[
  r_{l}^{*}
  \;\ge\;
  c(\varepsilon)\cdot\mathrm{erank}(\bm{F}_{l})
  \;\le\;
  c(\varepsilon)\cdot\mathrm{erank}(\bm{F}_{l+1})
  \;\le\;
  r_{l+1}^{*}.
\]
Hence $r_{l}^{*} \le r_{l+1}^{*}$, which is the claimed
monotonicity~\eqref{eq:rank-pattern}.
This completes the proof of the theorem.
\end{proof}

\begin{corollary}[Exponential rank profile]
\label{cor:exponential-profile}
Under the conditions of Theorem~\ref{thm:rq2},
the ratio of optimal ranks across layers satisfies
the exponential envelope:
\[
  \frac{r_{l}^{*}}{r_{L}^{*}}
  \;\le\;
  (\beta\gamma)^{2(L-l)}
  \quad \forall\, l \le L,
\]
suggesting that layers closer to the output require
exponentially higher rank than earlier layers when
$\beta\gamma < 1$.
\end{corollary}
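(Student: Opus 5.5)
The plan is to transfer the Fisher-trace decay of Theorem~\ref{thm:rq2}, Eq.~\eqref{eq:fisher-trace-chain}, onto the optimal ranks. That requires a two-sided comparison between $r_{l}^{*}$ and a trace-type spectral quantity of $\bm{F}_{l}$. Lemma~\ref{lem:fim-rank123} supplies only the lower bound $r_{l}^{*} \ge c(\varepsilon)\,\mathrm{erank}(\bm{F}_{l})$. A ratio bound needs an \emph{upper} bound on the numerator $r_{l}^{*}$ and a \emph{lower} bound on the denominator $r_{L}^{*}$, so I would first prove a matching upper bound.

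\textbf{Step 1 (upper bound on $r_{l}^{*}$).} Use the quadratic model Eq.~\eqref{eq:taylor2} with $\bm{H}_{l}\approx\bm{F}_{l}$. Choose the rank-$r$ update that captures $\bm{\phi}^{*,l}$ along the top eigendirections of $\bm{F}_{l}$. Its excess loss is at most $\tfrac12\sum_{i>\rho}\lambda_i(\bm{F}_{l})\,\langle \bm{u}_i,\bm{\phi}^{*,l}\rangle^2$. Requiring this to be $\le\varepsilon$ forces $\rho$ to be no larger than the number of eigenvalues above a threshold $\sigma(\varepsilon)$. Markov's inequality bounds that count by $\mathrm{tr}(\bm{F}_{l})/\sigma(\varepsilon)$. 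This gives $r_{l}^{*} \le C(\varepsilon)\,\mathrm{tr}(\bm{F}_{l})$, with $C(\varepsilon)$ independent of $l$ once $\|\bm{\phi}^{*,l}\|$ is uniformly bounded.

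\textbf{Step 2 (lower bound on $r_{L}^{*}$).} Combine Lemma~\ref{lem:fim-rank123} with Eq.~\eqref{eq:erank-trace}, read in the reverse direction, to get $r_{L}^{*} \ge c(\varepsilon)\,\mathrm{tr}(\bm{F}_{L})/\|\bm{F}_{L}\|_{2}$. I would use the uniform spectral-norm control invoked in Part~3 of the proof of Theorem~\ref{thm:rq2} to treat $\|\bm{F}_{L}\|_{2}$ as a constant.

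\textbf{Step 3 (chaining).} Dividing the two bounds and applying Eq.~\eqref{eq:fisher-trace-chain} gives
\[
\frac{r_{l}^{*}}{r_{L}^{*}} \le \frac{C(\varepsilon)\,\|\bm{F}_{L}\|_{2}}{c(\varepsilon)}\cdot\frac{\mathrm{tr}(\bm{F}_{l})}{\mathrm{tr}(\bm{F}_{L})} \le K(\varepsilon)\,(\beta\gamma)^{2(L-l)} .
\]
This is the stated envelope up to the constant $K(\varepsilon)$.

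\textbf{Main obstacle.} The hard part is removing the prefactor $K(\varepsilon)$ to obtain the corollary exactly as written. The obstruction is real. Ranks are positive integers, so $r_{l}^{*}/r_{L}^{*} \ge 1/r_{L}^{*}$. The exact inequality must therefore fail once $(\beta\gamma)^{2(L-l)} < 1/r_{L}^{*}$, that is, for layers far enough from the output whenever $\beta\gamma<1$. Likewise the lower and upper comparisons in Steps 1--2 carry different constants, which cannot cancel in general.

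I would first try to normalise $\varepsilon$ so that $K(\varepsilon)\le 1$. If that fails, I would state and prove the result in the honest form $r_{l}^{*}/r_{L}^{*} \le K(\varepsilon)(\beta\gamma)^{2(L-l)}$. For large depth gaps I would replace the ratio by $\max\{1,\cdot\}$, which I expect to be the strongest claim the available lemmas actually support. I would also note a wording issue: the bound caps how large earlier-layer ranks can be relative to $r_{L}^{*}$, so it does not by itself show that later layers require exponentially higher rank.
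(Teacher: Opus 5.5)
Your diagnosis is correct, and it is the decisive point. The paper's proof is one line. It takes Eq.~\eqref{eq:fisher-trace-chain} and asserts $r_{l}^{*}\propto\mathrm{erank}(\bm{F}_{l})\propto\mathrm{tr}(\bm{F}_{l})$ ``under the bounded spectral norm approximation,'' so the ratio of ranks equals the ratio of traces. That is exactly your Step~3 chain with $K(\varepsilon)=1$ and integrality ignored. Nothing in the paper supports it:
\begin{itemize}
\item Lemma~\ref{lem:fim-rank123} is one-sided, and the constant computed in its proof contains $\lambda_{\min}(\bm{F}_{l})^{-1}$, so it is not even layer-independent.
\item Assumption~\ref{ass:bounded-weights12} bounds $\|\bm{W}_{l}\|_2$, not $\|\bm{F}_{l}\|_2$.
\item Eq.~\eqref{eq:erank-trace} points the wrong way (see below).
\end{itemize}
Your obstruction shows the corollary is false as stated. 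Rearranged, it reads $r_{L}^{*}\ge(\beta\gamma)^{-2(L-l)}r_{l}^{*}\ge(\beta\gamma)^{-2(L-l)}$. But $r_{L}^{*}\le\min(d,k)$, since a full-rank adapter attains $\mathcal{L}^{*}$. So the corollary fails whenever $(\beta\gamma)^{2(L-l)}<1/\min(d,k)$, even for $L=2$ when $\beta\gamma$ is small. The same rearrangement shows your closing wording remark is off: capping $r_{l}^{*}$ relative to $r_{L}^{*}$ is the same statement as forcing $r_{L}^{*}$ to be exponentially larger. Drop the plan to normalise $\varepsilon$ so that $K(\varepsilon)\le 1$, because no choice of $\varepsilon$ defeats integrality. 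The honest target is $r_{l}^{*}\le\max\{1,\,K(\varepsilon)(\beta\gamma)^{2(L-l)}r_{L}^{*}\}$.

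Your weakened version still has gaps, which you should close before claiming it.
\begin{itemize}
\item \textbf{Rank conflation in Step~1.} You treat the number of Fisher eigendirections as if it were the matrix rank of $\Delta\bm{W}_{l}$. Eigenvectors of $\bm{F}_{l}\in\mathbb{R}^{dk\times dk}$, reshaped to $d\times k$ matrices, are generically full rank. So projecting $\bm{\phi}^{*,l}$ onto $\rho$ of them is in general not a rank-$\rho$ LoRA update, and $r_{l}^{*}\le\rho$ does not follow. You need extra structure, for example a Kronecker-factored $\bm{F}_{l}$, whose eigenvectors reshape to rank-one matrices.
\item \textbf{Centring of the quadratic model.} Your tail-sum formula presumes a quadratic centred at a nonzero $\bm{\phi}^{*,l}$. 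Eq.~\eqref{eq:taylor2} as written has zero gradient at $\bm{0}$ and positive-definite curvature. Its minimiser is then $\bm{\phi}^{*,l}=\bm{0}$, which makes every $r_{l}^{*}=1$. Incidentally, that alone contradicts the corollary for all $l<L$ when $\beta\gamma<1$. So state your model explicitly.
\item \textbf{Integrality in Step~1.} Write the bound as $r_{l}^{*}\le\max\{1,\,C(\varepsilon)\,\mathrm{tr}(\bm{F}_{l})\}$.
\item \textbf{Reversed inequality in Step~2.} Reading Eq.~\eqref{eq:erank-trace} ``in reverse'' is not an inference. The reversed inequality is nonetheless the true one. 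Shannon entropy dominates min-entropy, $H(\bar{\lambda})\ge-\log\max_i\bar{\lambda}_i$, which gives $\mathrm{erank}(\bm{M})\ge\mathrm{tr}(\bm{M})/\|\bm{M}\|_2$. For example, $\mathrm{diag}(1,\tfrac12)$ has effective rank about $1.89$ while $\mathrm{tr}(\bm{M})/\|\bm{M}\|_2=1.5$. Prove it directly from that fact.
\item \textbf{No uniform spectral-norm control needed.} $L$ is a single fixed layer, so $\|\bm{F}_{L}\|_2$ and the layer-$L$ constant from Lemma~\ref{lem:fim-rank123} simply go into $K(\varepsilon)$. The ``control'' from Part~3 of Theorem~\ref{thm:rq2}'s proof does not exist anyway.
\end{itemize}
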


\begin{proof}
From Theorem~\ref{thm:rq2},
$\mathrm{tr}(\bm{F}_{l}) \le (\beta\gamma)^{2(L-l)}
\mathrm{tr}(\bm{F}_{L})$.
Since $r_{l}^{*} \propto \mathrm{erank}(\bm{F}_{l})
\propto \mathrm{tr}(\bm{F}_{l})$ (under the bounded
spectral norm approximation), the ratio follows
immediately.
\end{proof}

\begin{remark}[When $\beta\gamma > 1$]
If $\beta\gamma > 1$, the gradient norms are
non-decreasing toward earlier layers (exploding
gradients), and the rank profile is reversed: earlier
layers may require higher rank.
This case corresponds to pathological training regimes
typically avoided through layer normalisation and
gradient clipping.
\end{remark}

\begin{remark}[Practical implication]
Theorem~\ref{thm:rq2} provides a theoretical basis for
rank annealing: starting with higher ranks in deeper
layers and lower ranks in earlier layers.
It also justifies our proposed Fisher-guided rank
allocation (Section~\ref{sec:method}), which uses
$\mathrm{tr}(\bm{F}_{l})$ estimated via gradient
norms to set $r_{l}$ dynamically.
\end{remark}

\begin{figure}[ht]
    \centering
    
    \begin{subfigure}{0.48\textwidth}
        \centering
        \includegraphics[width=\linewidth]{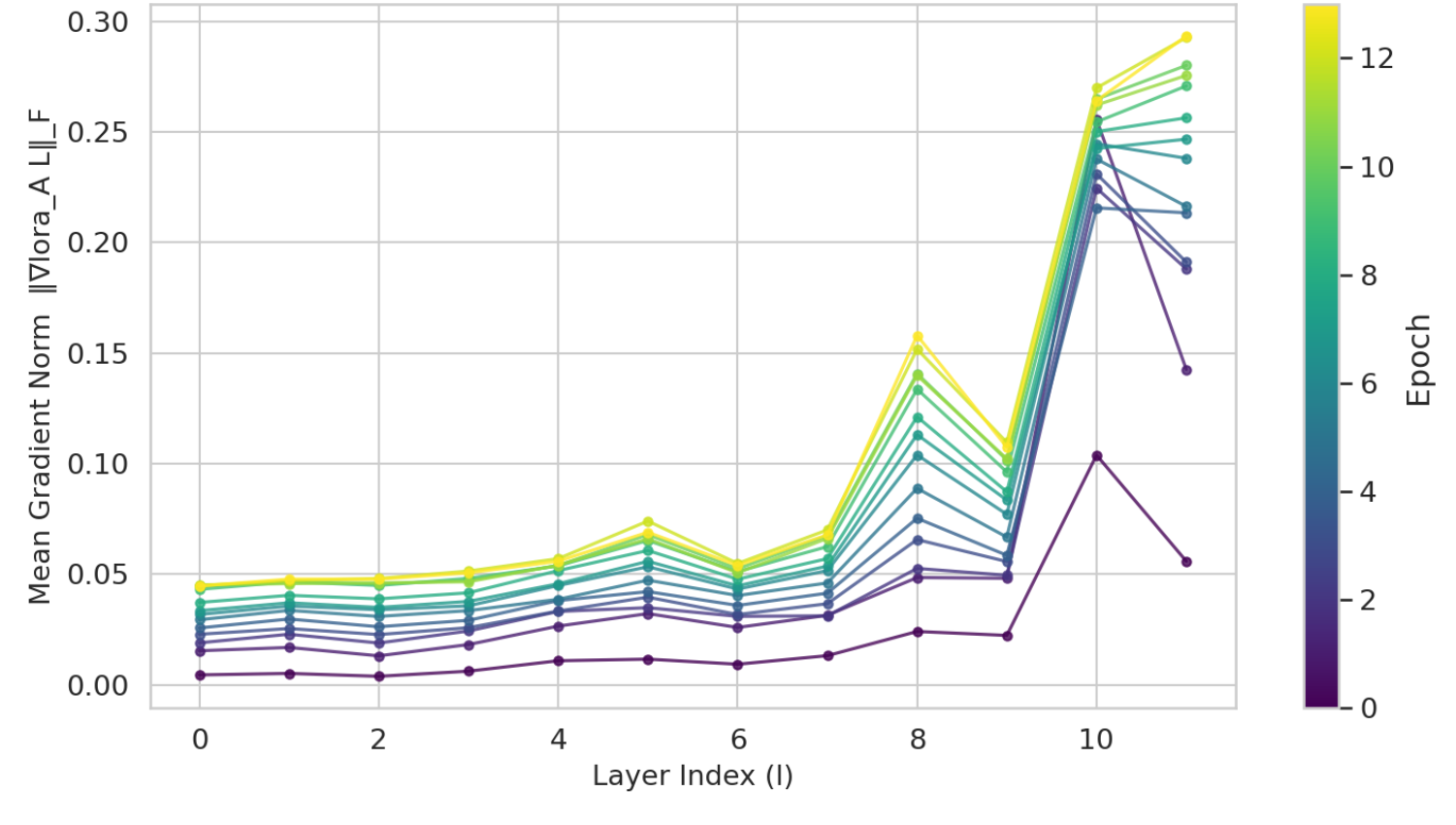}
        \caption{}
    \end{subfigure}
    \hfill
    \begin{subfigure}{0.48\textwidth}
        \centering
        \includegraphics[width=\linewidth]{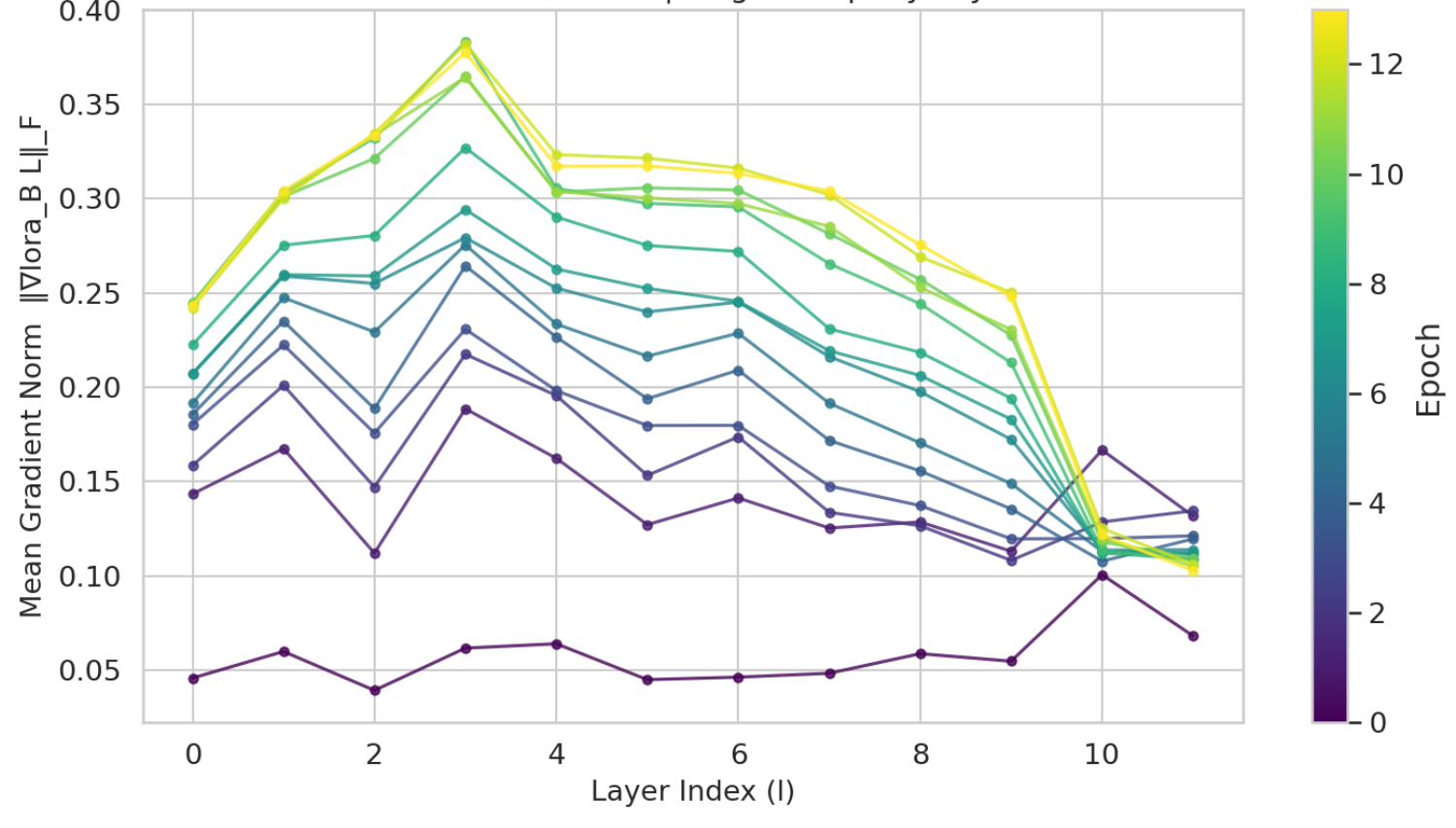}
        \caption{}
    \end{subfigure}
    
    \caption{Mean gradient norms vs layer index for Lora A and Lora B on CoLA dataset.}
    \label{fig1q}
\end{figure}

\begin{figure}[ht]
    \centering
    
    \begin{subfigure}{0.48\textwidth}
        \centering
        \includegraphics[width=\linewidth]{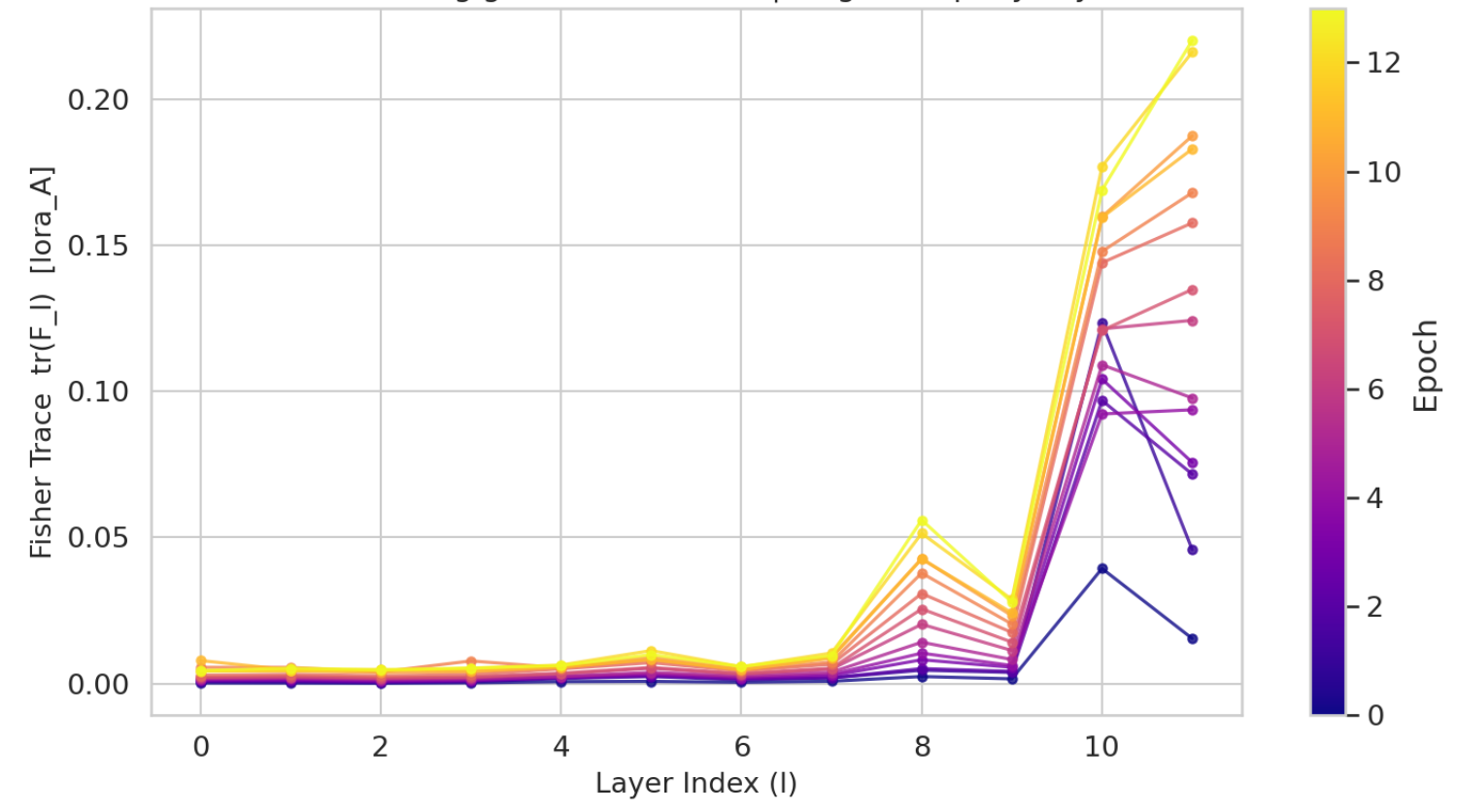}
        \caption{}
    \end{subfigure}
    \hfill
    \begin{subfigure}{0.48\textwidth}
        \centering
        \includegraphics[width=\linewidth]{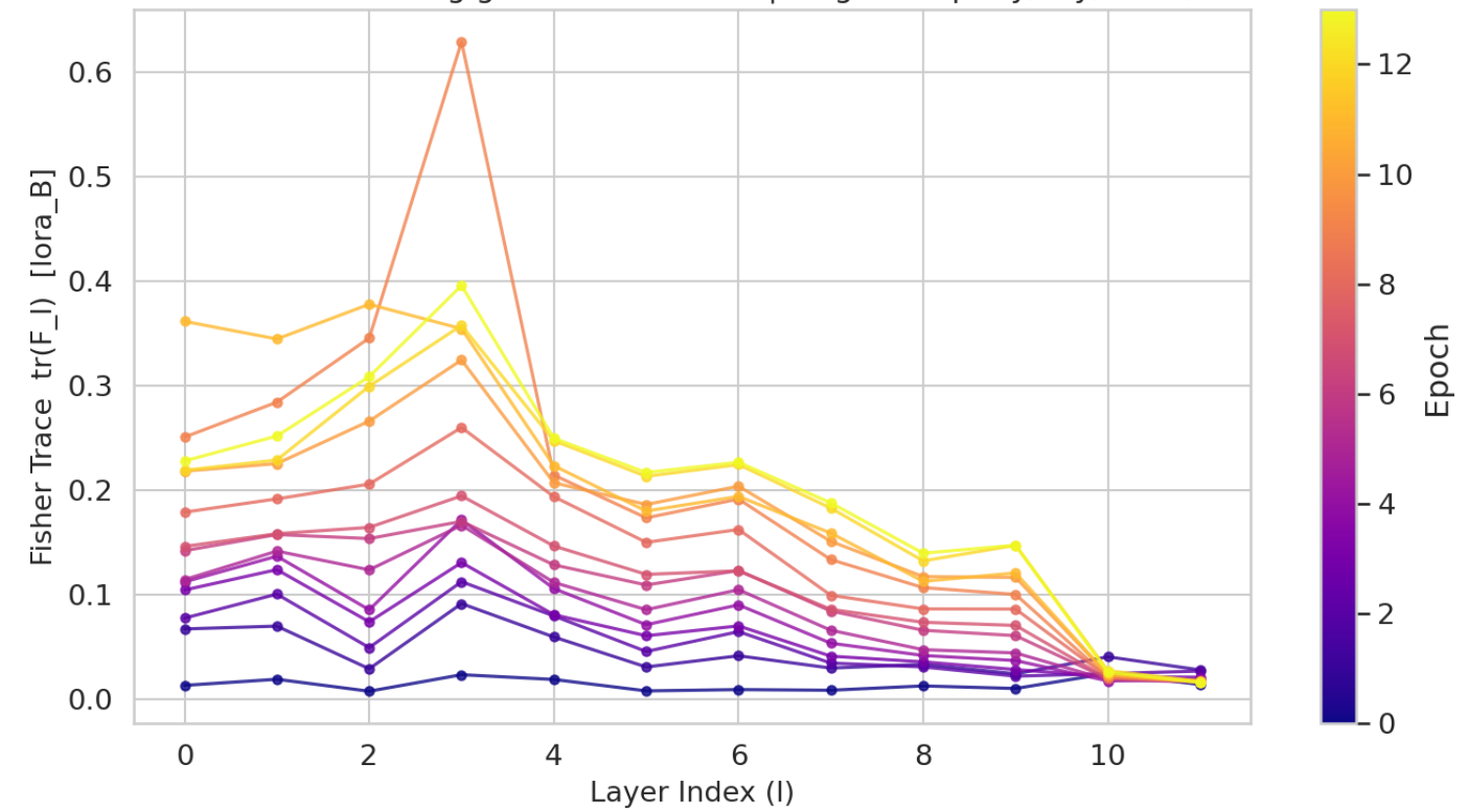}
        \caption{}
    \end{subfigure}
    
    \caption{Fisher trace vs layer index for Lora A and Lora B on CoLA dataset.}
    \label{fig1qe}
\end{figure}

\begin{wrapfigure}{r}{0.42\textwidth}
    \centering
    \includegraphics[width=0.40\textwidth]{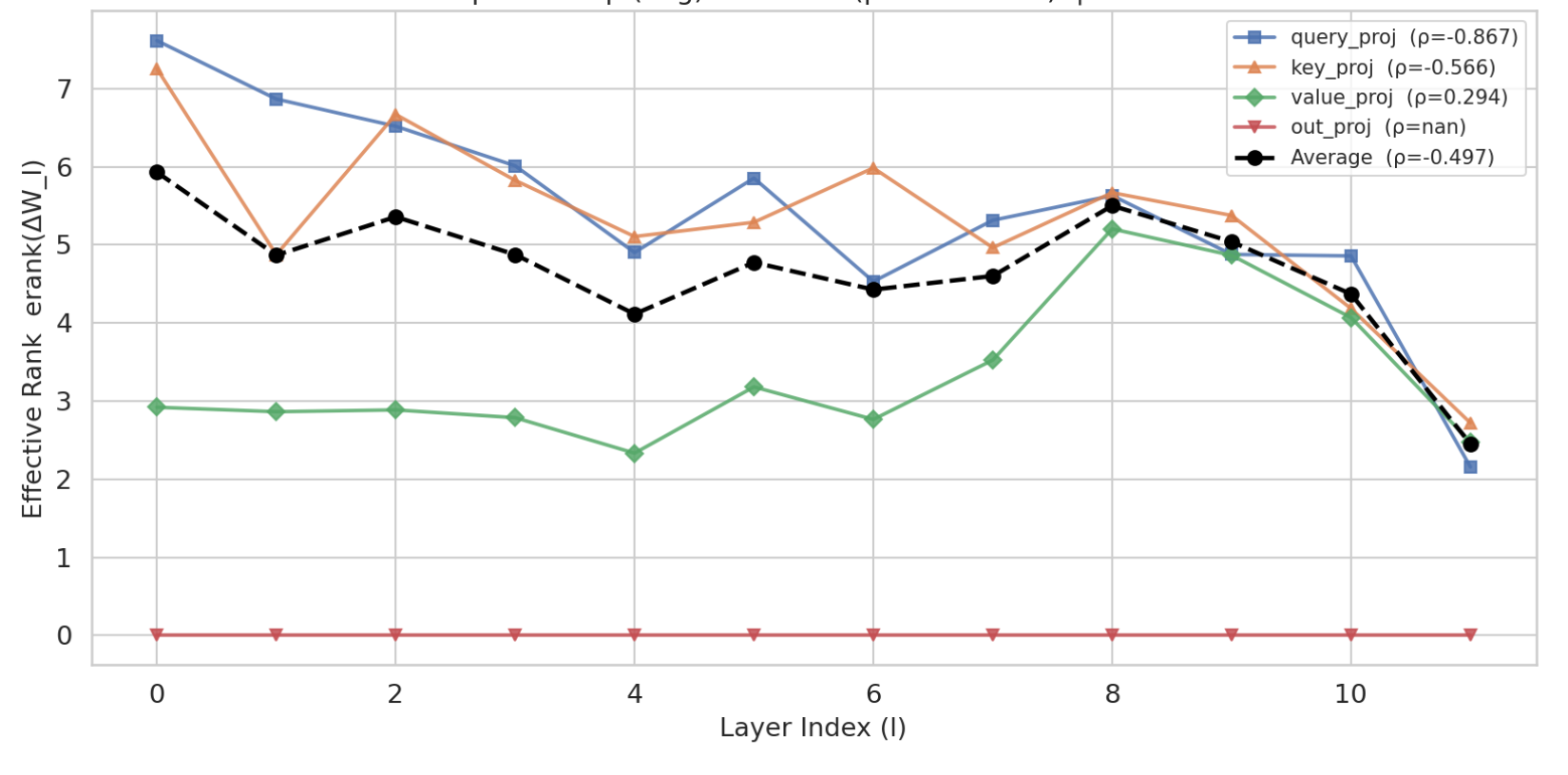}
    \caption{Effective rank vs layer index on CoLA dataset.}
    \label{fig3q}
\end{wrapfigure}

\end{document}